\relax
\documentclass[letterpaper]{article} 
\usepackage{aaai19}  
\usepackage{times}  
\usepackage{helvet}  
\usepackage{courier}  
\usepackage{url}  
\usepackage{graphicx}  
\frenchspacing  
\setlength{\pdfpagewidth}{8.5in}  
\setlength{\pdfpageheight}{11in}  
  \pdfinfo{
/Title (2019 Formatting Instructions for Authors Using LaTeX)
/Author (AAAI Press Staff)}
\setcounter{secnumdepth}{0}  

\usepackage{natbib}


\usepackage{dsfont}
\usepackage{amsmath}
\usepackage{amsfonts}
\usepackage{amssymb}
\usepackage{amsthm}
\usepackage{subfigure}
\usepackage{epsfig}
\usepackage{color}
\usepackage{enumerate}
\usepackage{placeins}

\usepackage{times}
\usepackage{graphicx} 
\usepackage{subfigure} 


\usepackage{algorithm}
\usepackage{algorithmic}

\usepackage[dvipsnames]{xcolor}

\usepackage{hyperref}

\usepackage{hyperref}


\newtheorem{lemma}{Lemma}
\newtheorem{theorem}{Theorem}
\newtheorem{corollary}{Corollary}
\newtheorem{definition}{Definition}
\newtheorem{assumption}{Assumption}
\newtheorem{remark}{Remark}

\theoremstyle{definition}






\DeclareMathOperator*{\argmax}{argmax}

 \begin{document}
%

\title{Non-Asymptotic Uniform Rates of Consistency for $k$-NN Regression}
\author{Heinrich Jiang\\Google Research\\Mountain View, CA\\heinrich.jiang@gmail.com
}
\maketitle
\begin{abstract}
We derive high-probability finite-sample uniform rates of consistency for $k$-NN regression that are optimal up to logarithmic factors under mild assumptions. We moreover show that $k$-NN regression adapts to an unknown lower intrinsic dimension automatically in the sup-norm. We then apply the $k$-NN regression rates to establish new results about estimating the level sets and global maxima of a function from noisy observations.
\end{abstract}

\section{Introduction}

The $k$-nearest neighbor ($k$-NN) regression algorithm is a classical approach to nonparametric regression.
The value of the functional
is taken to be the unweighted average
observation of the $k$ closest samples.
Although this procedure has been known for a long time and has a deep practical significance,
there is still surprisingly much about its convergence properties yet to be understood.

We derive finite-sample high probability uniform bounds for $k$-NN regression under a standard additive model 
$y = f(x) + \xi$ where $f$ is an unknown function, $\xi$ is sub-Gaussian white noise
and $y$ is the noisy observation. The samples $\{(x_i,y_i)\}_{i=1}^n$ are drawn i.i.d. as follows: $x_i$ is drawn according to an unknown density $p_X$, which shares the same support as $f$, and then observation $y_i$ is generated by the additive model based on $x_i$.

We then
give simple procedures to estimate the level sets and global maximas of a function given noisy observations and
apply the $k$-NN regression bounds to establish new Hausdorff recovery guarantees for these structures.
Each of these results are interesting on their own.

The bulk of the work on $k$-NN regression convergence theory is on its properties under various risk measures or asymptotic convergence.
Notions of consistency involving risk measures such as mean squared error are considerably weaker than the
sup-norm as the latter imposes a {\it uniform} guarantee
on the error $|f_k(x) - f(x)|$ where $f_k$ is the $k$-NN regression estimate of function $f$.
Existing work on studying $f_k$ under the sup-norm thus far are asymptotic.
We give the first sup-norm {\it finite-sample} result. This result matches the minimax optimal rate up to logarithmic factors.

We then discuss the setting where the data lies on a lower dimensional manifold. It is already known that $k$-NN regression
is able to automatically adapt to the intrinsic dimension under various risk measures: the rates depend only on the intrinsic dimension and independent of ambient dimension. We show that this is also the case in the sup-norm: we attain finite-sample bounds as if we were operating in the lower intrinsic dimension space without any modifications to the procedure. 

We then show the utility of our $k$-NN regression results in recovering certain structures of an arbitrary function, namely the level-sets and global maximas.
The motivation can be traced back to the rich theory of density-based clustering.
There, one is given a finite sample from a probability density $p$.
The clusters can then be modeled based on certain structures in the underlying density $p$. Such structures include
the level-sets $\{ x : p(x) \ge \lambda\}$ for some density level $\lambda$ or the local maximas of $p$.
Then to estimate these, one typically uses a plug-in approach using a density estimator $\widehat{p}$
(e.g. for level-sets, $\{ x : \widehat{p}(x) \ge \lambda\}$ and for modes, $\argmax_x \widehat{p}(x)$).
It turns out that given uniform bounds on $\widehat{p}$, we can estimate these structures with strong guarantees.

In this paper, instead of estimating these structures in a density, we estimate these structures for a general function $f$.
This is possible because of our established finite-sample sup-norm bounds for nonparametric regression.
There are however some key differences in our setting.
In the density setting, one has access to i.i.d. samples drawn from the density.
Here,  we have an i.i.d. sample $x$ drawn from some density $p_X$ not necessarily related to $f$, and then we obtain a noisy observation of the value $f(x)$. This
can be viewed as a noisy observation of the {\it feature} of $x$.
In other words, we estimate the stuctures based on the features of data, while in the density setting, there are no features
and the structures are instead based on the dense regions of the dataset.

\section{Related Works and Contributions}

\subsection{$k$-NN Regression Rates}

The consistency properties of $k$-NN regression have been studied for a long time and we highlight some of the work here.
\citet{biau2010rates} give guarantees under $L_2$ risk.
\citet{devroye1994strong} give consistency guarantees under the $L_1$ risk.
\citet{stone1977consistent} provides results under $L_p$ for $p \ge 1$.
All these notions of consistency so far are under some integrated risk, and thus are weaker than the sup-norm (i.e. $L_\infty$), which imposes a uniform
guarantee. 

A number of works such as 
\citet{mack1982weak, cheng1984strong,devroye1978uniform, lian2011convergence, kudraszow2013uniform} give strong uniform convergence rates. However,
these results are asymptotic. Our bounds explore the {\it finite-sample} consistency properties of $k$-NN regression, which we will demonstrate later can show strong results about $k$-NN based learning algorithms which were not possible with existing results.
To the best of our knowledge, this is
the first such finite-sample uniform consistency result for this procedure, which matches the minimax rate up to logarithmic factors. 

We then extend our results to the setting where the data lies on a lower dimensional manifold.
This is of practical interest because the curse of dimensionality forces nonparametric methods such as $k$-NN to require 
an exponential-in-dimension sample complexity; however as a concession, we can show that many of these methods can have sample complexity depending on the intrinsic dimension (e.g. doubling dimension, manifold dimension, covering number) and independent of the ambient dimension.
In modern data applications where the dimension can be arbitrarily high, oftentimes the number of degrees of freedom remains much lower. It thus becomes important 
to understand these methods under this setting.

\citet{kulkarni1995rates} give results for $k$-NN regression based on the covering numbers of the support of the distribution.
\citet{kpotufe2011k} shows that $k$-NN regression actually adapts to the local intrinsic dimension without any modifications to the procedure or data in the $L_2$ norm. In this paper, we show that this holds in the sup-norm as well for a global intrinsic dimension.

\subsection{Level Set Estimation}

Density level-set estimation has been extensively studied and has significant implications to density-based clustering. Some works include
\citet{tsybakov1997nonparametric,singh2009adaptive}.
It involves estimating $L_p(\lambda) := \{x : p(x) \ge \lambda\}$ given a finite i.i.d. sample $X$ from $p$, where $\lambda$ is some known density level and $p$ is the unknown density.
$L_p(\lambda)$ can be seen as the high density regions of the data and thus the connected components can be used as the core-sets in clustering.
It can be shown that given a density estimator $\widehat{p}_n$ with guarantees on $|\widehat{p}_n - p|_\infty$, 
then taking $\widehat{L}_p(\lambda) := \{ x \in X : \widehat{p}_n (x) \ge \lambda\}$, the Hausdorff distance 
between $L_p(\lambda)$ and $\widehat{L}_p(\lambda)$ can also be bounded. 

In this paper, we extend this idea to functions $f$ which are not necessarily densities given noisy observations of $f$.
We obtain similar results to those familiar in the density setting, which are made possible by our established bounds for estimating $f$.
An advantage of this approach is that it can be applied to clustering where there are features where clusters are defined 
as regions of similar feature value rather than similar density. In density-based clustering, it is typical that one does not assume access to the features and thus such procedures fail to readily take advantage of the features when performing clustering.
A similar approach was taken by \citet{willett2007minimax} by using nonparametric regression to estimate the level sets of a function; however our consistency results are instead under the Hausdorff metric.

\subsection{Global Maxima Estimation}

We next give an interesting result for estimating the global maxima of a function.
Given $n$ i.i.d. samples from some distribution on the input space and seeing a noisy observations of $f$ at the samples,
we show a guarantee on the distance between the sample point with the highest $k$-NN regression value and the (unique) point which maximizes $f$.
This gives us insight into how well a grid search or randomized search can estimate the maximum of a function.

This result can be compared to mode estimation in the density setting where the object is to find the point which maximizes the density function \cite{tsybakov1990recursive}. \citet{dasgupta2014optimal} show that given $n$ draws from a density, the sample point which maximizes the $k$-NN density estimator is close to the true maximizer of the density; moreover they give finite-sample rates. Earlier works such as \citet{romano1988weak} provide asymptotic rates.

\section{$k$-NN Regression}

Throughout the paper, we assume a function $f$ with compact support $\mathcal{X}\subseteq \mathbb{R}^D$
and that we have datapoints $(x_1,y_1),...,(x_n,y_n)$ drawn follows.
The $x_i$'s are drawn i.i.d. from density $p_X$ with support $\mathcal{X}$.
Then $y_i = f(x_i) + \xi_{x_i}$ where $\xi_{x_i}$ are i.i.d. drawn according to random variable $\xi$.

\begin{definition}
$f : \mathcal{X} \rightarrow \mathbb{R}$ where $\mathcal{X} \subseteq \mathbb{R}^D$ is compact. 
\end{definition}

The first regularity assumption ensures that the support $\mathcal{X}$ does not become arbitrarily thin anywhere. Otherwise,
it becomes impossible to estimate the function in such areas from a random sample.
\begin{assumption}[Support Regularity] \label{a1}
There exists $\gamma > 0$ and $r_0 > 0$ such that $\text{Vol}(\mathcal{X} \cap B(x, r)) \ge \gamma \cdot \text{Vol}(B(x, r))$ for all $x \in \mathcal{X}$ and $0 < r < r_0$.
\end{assumption}

The next assumption ensures that with a sufficiently large sample, we will obtain a good covering of the input space.
\begin{assumption} 
[$p_X$ bounded from below] \label{a2} $p_{X, 0} := \inf_{x \in \mathcal{X}} p_X(x) > 0$.
\end{assumption}

Finally, we have a standard sub-Gaussian white noise assumption in our additive model.
\begin{assumption} [Sub-Gaussian White noise] \label{a3}
$\xi$ satisfies
$E[\xi] = 0$ and sub-Gaussian with parameter $\sigma^2$ (i.e. $E[\exp(\lambda\xi)] \le \exp(\sigma^2\lambda^2/2)$ for all $\lambda \in \mathbb{R}$).
\end{assumption}

Then define $k$-NN regression as follows.
\begin{definition} [$k$-NN]
Let the $k$-NN radius of $x \in \mathcal{X}$ be $r_k(x) := \inf \{ r : |B(x, r) \cap X| \ge k \}$ where $B(x, r) := \{x' \in \mathcal{X} : |x - x'| \le r \}$ and the $k$-NN set of $x \in \mathcal{X}$ be $N_k(x) := B(x, r_k(x)) \cap X$. 
Then for all $x \in \mathcal{X}$, the $k$-NN regression function with respect to the samples is defined as
\begin{align*} 
f_k(x) := \frac{1}{|N_k(x)|} \sum_{i=1}^n y_i \cdot 1\left[ x_i \in N_k(x) \right].
\end{align*}
\end{definition}

Next, we define the following pointwise modulus of continuity, which will be used to express the bias for an arbitrary function in later result.
\begin{definition} [Modulus of continuity]
$u_f(x, r) := \sup_{x' \in B(x, r)} |f(x) - f(x')|$.
\end{definition}


We now state our main result about $k$-NN regression. 
Informally, it says that under the mild assumptions described above, for $k \gtrsim \log n$,
$|f_k(x) - f(x)| \lesssim u_f(x, (k/n)^{1/D}) + \sqrt{(\log n)/k}$ uniformly in $x \in \mathcal{X}$ with high probability.

The first term correponds to the bias term. Using uniform VC-type concentration bounds, it can be shown that the $k$-NN radius
can be uniformly bounded by approximately distance $(k/n)^{1/D}$ and hence no point in the $k$-NN set will be that far. The bias can then be expressed 
in terms of that distance and $u_f$.

The second term corresponds to the variance. The $1/\sqrt{k}$ factor is not surprising since the noise terms are averaged over $k$ observations
and the extra $\sqrt{\log n}$ factor comes from the cost of obtaining a uniform bound.

\begin{definition}
Let $v_D$ be the volume of a $D$-dimensional unit ball.
\end{definition}

\begin{theorem} [$k$-NN Regression Rate] \label{theo:knn}
Suppose that Assumptions~\ref{a1},~\ref{a2}, and~\ref{a3} hold and that
\begin{align*}
2^8 \cdot D \log^2(4/\delta) \cdot \log n \le k \le \frac{1}{2} \cdot \gamma \cdot p_{X,0} \cdot v_D \cdot r_0^D \cdot n.
\end{align*}
Then probability at least $1 - \delta$, the following holds uniformly in $x \in \mathcal{X}$.
\begin{align*}
|f(x) - f_k(x)| &\le u_f\left(x,\left(\frac{2k}{\gamma \cdot p_{X, 0}\cdot v_D\cdot n}\right)^{1/D}\right) \\
&+  2\sigma \sqrt{\frac{D \log n + \log(2/\delta)}{k}}.
\end{align*}
\end{theorem}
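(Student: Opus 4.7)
The plan is to decompose the pointwise error in the standard bias--variance fashion
\begin{align*}
f_k(x) - f(x) = \frac{1}{|N_k(x)|}\sum_{x_i \in N_k(x)}(f(x_i) - f(x)) + \frac{1}{|N_k(x)|}\sum_{x_i \in N_k(x)} \xi_{x_i},
\end{align*}
and to bound each of the two sums uniformly in $x \in \mathcal{X}$ with probability at least $1-\delta$.

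For the bias term, the goal is to show that the $k$-NN radius satisfies
$r_k(x) \le r^* := \bigl(2k/(\gamma\, p_{X,0}\, v_D\, n)\bigr)^{1/D}$ uniformly. Using Assumptions~\ref{a1} and~\ref{a2}, the true mass of any ball $B(x,r^*)\cap\mathcal{X}$ is at least $\gamma\, p_{X,0}\, v_D\, (r^*)^D = 2k/n$, where $r^* < r_0$ follows from the upper bound on $k$. A VC-type uniform concentration inequality applied to the class of Euclidean balls (which has VC dimension $O(D)$) then gives, with probability at least $1 - \delta/2$ and with the choice of $k \gtrsim D\log^2(4/\delta)\log n$, that the empirical mass of every such ball is at least half of its true mass, hence at least $k/n$. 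This forces $r_k(x) \le r^*$ for all $x\in\mathcal{X}$ simultaneously, so that every $x_i \in N_k(x)$ lies within distance $r^*$ of $x$, and each summand $|f(x_i) - f(x)|$ is bounded by $u_f(x, r^*)$.

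For the variance term, conditional on $X$, each partial sum $\sum_{i\in S}\xi_i$ is sub-Gaussian with parameter $|S|\sigma^2$, so for a fixed set $S$ of size $k$,
\begin{align*}
\Pr\!\left(\Bigl|\tfrac{1}{k}\sum_{i\in S}\xi_i\Bigr| \ge t\right) \le 2\exp\!\left(-\tfrac{k t^2}{2\sigma^2}\right).
\end{align*}
The subtlety is that $N_k(x)$ depends on $x$, but the key observation is that every $N_k(x)$ equals $B(x,r_k(x))\cap X$, hence lies in the collection of subsets of $X$ cut out by Euclidean balls. By Sauer--Shelah applied to the VC class of balls in $\mathbb{R}^D$, the number of such distinct subsets is at most polynomial in $n$ of degree $O(D)$. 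A union bound over this collection, followed by setting $t = 2\sigma\sqrt{(D\log n + \log(2/\delta))/k}$, controls the variance term by $\delta/2$ with the claimed rate.

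The main obstacle is the variance step: one must resist the naive union bound over all $\binom{n}{k}$ size-$k$ subsets (which would blow up the $\log n$ factor) and instead exploit that $k$-NN neighborhoods are always ball-cuts, so that the effective cardinality of the index class is only $n^{O(D)}$. Combining the two bounds by a final union bound yields the stated inequality with probability at least $1-\delta$.
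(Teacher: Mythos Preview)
Your proposal is correct and follows the paper's overall structure exactly: the same bias--variance split, the same VC-type concentration (the paper invokes Lemma~7 of Chaudhuri--Dasgupta) to bound $r_k(x)$ uniformly, and the same Hoeffding-plus-union-bound scheme for the variance. The one place you diverge is in the counting step. You bound the number of distinct $k$-NN sets by observing that each $N_k(x)$ is a ball-cut of $X$ and invoking Sauer--Shelah for the VC class of Euclidean balls, which yields $n^{O(D)}$. The paper instead proves a dedicated lemma (Lemma~\ref{knncount}) by a direct geometric argument: the $\binom{n}{2}$ perpendicular-bisector hyperplanes partition $\mathcal{X}$ into at most $\sum_{j=0}^{D}\binom{n}{j}\le D\cdot n^{D}$ cells, and $N_k(\cdot)$ is constant on each cell. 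Both routes give the $O(D\log n)$ contribution inside the square root, so the final bound is identical. Your Sauer--Shelah argument is shorter and more portable (it would generalize immediately to other VC classes of neighborhoods), while the paper's arrangement argument is self-contained and gives an explicit constant without appealing to VC theory a second time.
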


Note that the above result is fairly general and makes no smoothness assumptions. In particular, $f$ need not even be continuous. It is also important to point out that $n$ must be sufficiently large in order for there to exist a $k$ that satisfies the conditions. We can then apply this to the class of H\"older continuous functions to obtain the following result.

\begin{corollary}[Rate for $\alpha$-H\"older continuous functions] \label{theo:knnholder}
Let $0 < \alpha \le 1$. Suppose that Assumptions~\ref{a1},~\ref{a2}, and~\ref{a3} hold and
\begin{align*}
2^8 \cdot D \log^2(4/\delta) \cdot \log n \le k \le \frac{1}{2} \cdot \gamma \cdot v_D \cdot p_{X,0}\cdot  r_0^D \cdot n.
\end{align*}
If $f$ is H\"older continuous (i.e. $|f(x) - f(x')| \le C_\alpha |x-x'|^\alpha$), then the following holds:
\begin{align*}
\mathbb{P} \Bigg(\sup_{x \in \mathcal{X}} &|f(x) - f_k(x)| \le C_\alpha \left(\frac{2k}{\gamma\cdot p_{X, 0}\cdot v_D\cdot n}\right)^{\alpha/D} \\
&+ 2\sigma \sqrt{\frac{D \log n + \log(2/\delta)}{k}} \Bigg) \ge 1 - \delta.
\end{align*}
\end{corollary}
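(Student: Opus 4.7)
The plan is to derive this corollary as a direct specialization of Theorem~\ref{theo:knn} to the H\"older class. Theorem~\ref{theo:knn} already gives, uniformly over $x \in \mathcal{X}$ and with probability at least $1-\delta$, a bound of the form $|f(x)-f_k(x)| \le u_f(x,r_n) + 2\sigma\sqrt{(D\log n + \log(2/\delta))/k}$, where $r_n = (2k/(\gamma p_{X,0} v_D n))^{1/D}$. Since the variance term is independent of $x$ and of $f$, all that remains is to control the modulus of continuity $u_f(x,r_n)$ uniformly in $x$ using the H\"older assumption.

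First I would observe that $\alpha$-H\"older continuity directly yields the pointwise estimate
\begin{align*}
u_f(x,r) = \sup_{x' \in B(x,r)} |f(x)-f(x')| \le \sup_{x' \in B(x,r)} C_\alpha |x-x'|^\alpha \le C_\alpha r^\alpha,
\end{align*}
for every $x \in \mathcal{X}$ and every $r > 0$. Plugging in $r = r_n$ gives $u_f(x,r_n) \le C_\alpha (2k/(\gamma p_{X,0} v_D n))^{\alpha/D}$, with the right-hand side independent of $x$.

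Second, I would substitute this uniform bound for the bias term into the conclusion of Theorem~\ref{theo:knn}. Because the event on which Theorem~\ref{theo:knn} holds already has probability at least $1-\delta$, and the bias estimate above is deterministic, the same probability $1-\delta$ is inherited by the supremum version of the bound. Taking the supremum over $x \in \mathcal{X}$ on the left-hand side then yields the stated inequality.

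There is no substantive obstacle here: the content of the corollary is simply the translation of the abstract modulus-of-continuity bound in Theorem~\ref{theo:knn} into a concrete $(k/n)^{\alpha/D}$ bias rate under a smoothness assumption, and the only thing to check is that the H\"older constant $C_\alpha$ and exponent $\alpha/D$ propagate correctly through the $r_n$ expression. The assumption that $k$ lies in the stated range is used solely to invoke Theorem~\ref{theo:knn}, so no additional work on the sample-size conditions is needed.
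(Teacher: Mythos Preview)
Your proposal is correct and matches the paper's approach exactly: the corollary is an immediate consequence of Theorem~\ref{theo:knn} obtained by bounding the modulus of continuity via the H\"older condition, $u_f(x,r) \le C_\alpha r^\alpha$, and the paper treats it as such without giving a separate proof.
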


\begin{remark}
Taking $k = O(n^{2\alpha/(2\alpha + D)})$ gives us a rate of $$\sup_{x\in\mathcal{X}} |f(x) - f_k(x)|_\infty \lesssim \widetilde{O}(n^{-\alpha/(2\alpha + D)}),$$ which is the minimax optimal rate for estimating a H\"older function, up to logarithmic factors.
\end{remark}

\begin{remark}
It is understood that all our results will also hold under the assumption that the $x_i$'s are fixed and deterministic (e.g. on a grid) as long as there is a sufficient covering of the space.
\end{remark}

\section{Regression On Manifolds}
In this section, we show that if the data has a lower intrinsic dimension, then 
$k$-NN will automatically attain rates as if it were in the lower dimensional space 
and independent of the ambient dimension.

We make the following regularity assumptions which are standard among works 
in manifold learning e.g. \citep{genovese2012minimax, balakrishnan2013cluster}.
\begin{assumption}\label{manifold}
$\mathcal{P}$ is supported on $M$ where:
\begin{itemize}
\itemsep0em 
\item $M$ is a $d$-dimensional smooth compact Riemannian manifold without boundary embedded in compact subset $\mathcal{X} \subseteq \mathbb{R}^D$.
\item The volume of $M$ is bounded above by a constant.
\item $M$ has condition number $1/\tau$, which controls the curvature and prevents self-intersection.
\end{itemize}
Let $p_X$ be the density of $\mathcal{P}$ with respect to the uniform measure on $M$.
\end{assumption}

We now give the manifold analogues of Theorem~\ref{theo:knn} and Corollary~\ref{theo:knnholder}.
\begin{theorem} [$k$-NN Regression Rate] \label{theo:knnmanifold}
Suppose that Assumptions~\ref{a2},~\ref{a3}, and~\ref{manifold} hold and that
\begin{align*}
&k \ge 2^8 \cdot D \log^2(4/\delta) \cdot \log n \\
&k \le \frac{1}{4} \left(\min \left\{ \frac{\tau}{4d}, \frac{1}{\tau} \right\} \right)^d p_{X, 0}\cdot v_d \cdot n.
\end{align*}
Then with probability at least $1 - \delta$, the following holds uniformly in $x \in \mathcal{X}$.
\begin{align*}
|f(x) - f_k(x)| &\le u_f\left(x,\left( \frac{4k}{ v_d \cdot n \cdot p_{X, 0}}\right)^{1/d}\right) \\
&+  2\sigma \sqrt{\frac{D \log n + \log(2/\delta)}{k}}.
\end{align*}
\end{theorem}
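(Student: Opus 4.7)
The plan is to follow the template of the proof of Theorem~\ref{theo:knn}, replacing the Euclidean support-regularity bound (Assumption~\ref{a1}) by an intrinsic manifold volume bound. Write
\[
f_k(x) - f(x) = \frac{1}{k}\sum_{x_i \in N_k(x)} \bigl(f(x_i) - f(x)\bigr) + \frac{1}{k}\sum_{x_i \in N_k(x)} \xi_{x_i},
\]
so the first sum is controlled in absolute value by $u_f(x, r_k(x))$ and the second is a sub-Gaussian noise average. The proof then reduces to a uniform upper bound on the $k$-NN radius $r_k(x)$ and a uniform concentration bound on the noise averages.

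For the radius bound, the key ingredient is the standard manifold-geometry fact that for $x \in M$ and $r \le \min\{\tau/(4d),\,1/\tau\}$, the intrinsic volume satisfies $\mathrm{Vol}_d(M \cap B(x,r)) \ge \tfrac12 v_d r^d$; combined with $p_X \ge p_{X,0}$ this gives the uniform mass lower bound $P_X(B(x,r)) \ge \tfrac12 p_{X,0} v_d r^d$. Applying the same VC-type uniform empirical process argument used in Theorem~\ref{theo:knn} to the class of Euclidean balls (VC dimension $O(D)$), with probability at least $1-\delta/2$ every ball $B(x, r_\star)$ with $r_\star := (4k/(v_d n p_{X,0}))^{1/d}$ contains at least $k$ sample points uniformly over $x \in \mathcal{X}$, whence $r_k(x) \le r_\star$ uniformly. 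The upper bound on $k$ in the hypothesis is exactly calibrated so that $r_\star$ lies in the range where the intrinsic volume estimate is valid.

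For the variance, observe that as $x$ ranges over $\mathcal{X}$ the set $N_k(x)$ takes at most $n^{O(D)}$ distinct values (again by the VC dimension of Euclidean balls in $\mathbb{R}^D$), so a union bound over these sets together with the sub-Gaussian tail for each fixed $k$-subset average yields the $2\sigma\sqrt{(D\log n + \log(2/\delta))/k}$ term with probability at least $1-\delta/2$. Combining the two halves by a union bound completes the proof. The main obstacle is the intrinsic volume lower bound: the reach $\tau$ is precisely what prevents $M$ from curling away from the ambient Euclidean ball, and this is the single step that converts an ambient-dimensional rate into an intrinsic-dimensional one. Once it is in place, the remainder of the argument is a direct adaptation of the Euclidean proof with $d$ substituted for $D$ in the bias term, while the variance term keeps $D$ because it still tracks the combinatorial complexity of ambient Euclidean balls.
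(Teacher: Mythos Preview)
Your proposal is correct and mirrors the paper's proof: the paper states that Theorem~\ref{theo:knnmanifold} follows from the proof of Theorem~\ref{theo:knn} by replacing Lemma~\ref{rkbound} with its manifold analogue (Lemma~\ref{rkboundmanifold}), which in turn rests on exactly the intrinsic-volume lower bound $\mathrm{Vol}_d(M\cap B(x,r))\ge\tfrac12 v_d r^d$ you invoke (their Lemma~\ref{ballvolume}). The only cosmetic difference is that you bound the number of distinct $k$-NN sets via the VC dimension of ambient balls, whereas the paper's Lemma~\ref{knncount} counts cells in the arrangement of perpendicular-bisector hyperplanes; both yield an $n^{O(D)}$ bound and feed the same union-bound/Hoeffding argument for the variance term.
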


Similar to the full dimensional case, we can then apply this to the class of H\"older continuous functions.
\begin{corollary}[Rate for $\alpha$-H\"older continuous functions] \label{theo:knnholdermanifold}
Let $0 < \alpha \le 1$. Suppose that Assumptions~\ref{a2},~\ref{a3}, and~\ref{manifold} hold and
\begin{align*}
&k \ge  2^8 \cdot D \log^2(4/\delta) \cdot \log n \\
&k \le \frac{1}{4} \left(\min \left\{ \frac{\tau}{4d}, \frac{1}{\tau} \right\} \right)^d p_{X, 0}\cdot v_d \cdot n.
\end{align*}
If $f$ is H\"older continuous (i.e. $|f(x) - f(x')| \le C_\alpha |x-x'|^\alpha$), then the following holds
\begin{align*}
\mathbb{P} \Bigg(\sup_{x \in \mathcal{X}} &|f(x) - f_k(x)| \le C_\alpha \left( \frac{4k}{ v_d \cdot n \cdot p_{X, 0}}\right)^{\alpha/d} \\
&+ 2\sigma \sqrt{\frac{D \log n + \log(2/\delta)}{k}} \Bigg) \ge 1 - \delta.
\end{align*}
\end{corollary}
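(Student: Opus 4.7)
The plan is to derive Corollary~\ref{theo:knnholdermanifold} directly from Theorem~\ref{theo:knnmanifold} by specializing the modulus of continuity to the H\"older class. Since the hypotheses of the corollary (Assumptions~\ref{a2}, \ref{a3}, \ref{manifold} and the two-sided bound on $k$) coincide with those of Theorem~\ref{theo:knnmanifold}, that theorem immediately yields the event
\begin{align*}
\sup_{x \in \mathcal{X}} \Bigl( |f(x) - f_k(x)| - u_f(x, r_n) \Bigr) \le 2\sigma\sqrt{\tfrac{D\log n + \log(2/\delta)}{k}}
\end{align*}
with probability at least $1-\delta$, where I write $r_n := \bigl(4k/(v_d\cdot n\cdot p_{X,0})\bigr)^{1/d}$ for the manifold bias radius.

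Next, I would bound the modulus of continuity using the $\alpha$-H\"older assumption. For any fixed $x \in \mathcal{X}$ and any $x' \in B(x, r_n)$, the H\"older inequality $|f(x)-f(x')| \le C_\alpha |x-x'|^\alpha$ together with $|x-x'| \le r_n$ gives $|f(x)-f(x')| \le C_\alpha r_n^\alpha$. Taking the supremum over $x' \in B(x, r_n)$ yields
\begin{align*}
u_f(x, r_n) \le C_\alpha\, r_n^\alpha = C_\alpha \left(\tfrac{4k}{v_d \cdot n \cdot p_{X,0}}\right)^{\alpha/d}
\end{align*}
uniformly in $x \in \mathcal{X}$, since the right-hand side is independent of $x$.

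Substituting this uniform bound on $u_f(x, r_n)$ into the event from Theorem~\ref{theo:knnmanifold} and taking the supremum over $x$ inside the same probability event produces exactly the claimed inequality. There is no real obstacle here: all the heavy lifting, namely the control of the $k$-NN radius on the manifold via the volume lower bound from Assumption~\ref{manifold} and the sub-Gaussian noise concentration, is already carried out in the proof of Theorem~\ref{theo:knnmanifold}. The corollary is a one-step specialization, and the only thing to verify carefully is that $r_n$ appearing in the theorem matches the radius in the H\"older substitution and that the $k$-range in both statements is identical, which is immediate from inspection.
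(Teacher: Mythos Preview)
Your proposal is correct and matches the paper's approach: the paper does not even write out a separate proof for this corollary, treating it as an immediate specialization of Theorem~\ref{theo:knnmanifold} obtained by bounding $u_f(x,r)\le C_\alpha r^\alpha$ under the H\"older assumption. Your verification that the $k$-range and the bias radius coincide with those in the theorem is exactly the right (and only) thing to check.
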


\begin{remark}
Taking $k = O(n^{2\alpha/(2\alpha + d)})$ gives us a rate of $\widetilde{O}(n^{-\alpha/(2\alpha + d)})$, which is more attractive than 
the full dimensional version $\widetilde{O}(n^{-\alpha/(2\alpha + D)})$ when intrinsic dimension $d$ is lower than ambient dimension $D$. We note that the bound contains a constant factor depending on $D$ but the rate at which it decreases as $n$ grows does not.
\end{remark}

\section{Level Set Estimation}

The level set is the region of the input space that have value greater than a fixed threshold.
\begin{definition} [Level-Set]
\begin{align*}
L_f(\lambda) := \{ x \in \mathcal{X} : f(x) \ge \lambda \}.
\end{align*}
\end{definition}

In order to estimate the level-sets, we require the following regularity assumption.
It states that for each maximal connected component of the level-set, the change in the function around the boundary
has a Lipschitz form with smoothness and curvature $\beta > 0$ around some neighborhood of the boundary. 
This notion of regularity at the boundaries of the level-sets is a standard one in density level-set estimation e.g. \citet{tsybakov1997nonparametric,singh2009adaptive}.

\begin{definition} [Level-Set Regularity]
Let $d(x, C) := \inf_{x' \in C} |x-x'|$, $\partial C$ be the boundary of $C$, and $C \oplus r := \{x' : d(x', C) \le r \}$.
A function $f$ satisfies $\beta$-regularity at level $\lambda$ if the following holds.
There exists $r_M , \check{C}, \hat{C} > 0$ such that
for each maximal connected subset $C \subseteq L_f(\lambda)$, we have
\begin{align*}
\check{C} \cdot d(x, \partial C)^\beta \le |\lambda - f(x)|  \le \hat{C} \cdot d(x, \partial C)^\beta,
\end{align*}
for all $x \in \partial C \oplus r_M$.
\end{definition}

\begin{remark}
The upper bound on $|\lambda - f(x)|$ ensures that $f$ is sufficiently smooth so that $k$-NN regression will give us sufficiently accurate estimates near the boundaries. The lower bound on $|\lambda - f(x)|$ ensures that the level-set is salient enough to be detected.
\end{remark}

To recover $L_f(\lambda)$ based on the samples, we use the following estimator, where $X := \{x_1,...,x_n\}$.
\begin{align*}
\widehat{L}(\lambda) := \{ x \in X : f_k(x) \ge \lambda - \epsilon \},
\end{align*}
where $\epsilon := 4\hat{\sigma} \sqrt{\frac{D\log n + \log(2/\delta)}{k}}$ and $\hat{\sigma} := \sqrt{\frac{2}{n} \sum_{i=1}^m y_i^2}$. It will become clear later in the proofs that $\hat{\sigma}$ is meant to be an upper bound on $\sigma$ and thus $\epsilon$ is an upper bound on twice the variance of term of the $k$-NN bound. 

There are three simple but key differences of our estimator when compared to 
 $L_f(\lambda)$. The first is that since we don't have access to the true function $f$, we use the $k$-NN regression estimate $f_k$. Next, instead of taking $x \in \mathcal{X}$, we instead restrict to the samples $X$. This makes our estimator feasible to compute since it will be a subset of the sample points. Finally, we have the $\epsilon$ to bound the uniform deviation of $|f_k - f|$ near the boundary of the level-set (as will be apparent in the proof).
 The main difficulty is choosing $\epsilon$ large enough to bound this uniform deviation, but not too large to overestimate the level-set and finally ensuring that $\epsilon$ can be computed without knowledge of $f$ or any unknown constants (we only need confidence parameter $\delta$ and the dimension, as well as $k$). 
 Thus, our estimator is practical.

We provide consistency result under the Hausdorff metric. We note that this is a strong notion of consistency since it a uniform guarantee on the constituents of our estimator.
\begin{definition}[Hausdorff Distance]
\begin{align*}
d_H(X, Y) = \inf\{\epsilon \ge 0 : X \subseteq Y \oplus \epsilon, Y \subseteq X \oplus \epsilon \}.
\end{align*}
\end{definition}

The next result gives us finite-sample consistency rates for our estimator. 
\begin{theorem}[Level Set Recovery]\label{theo:levelset}
Suppose that Assumptions~\ref{a1},~\ref{a2}, and~\ref{a3} hold.
Let $f$ be continuous and
satisfy $\beta$-regularity at level $\lambda$.
Define $M := \sqrt{\mathbb{E}[y_1^2]}$ where the expectation is taken over $p_X$ and $\xi$, and suppose that $n$ is sufficiently large depending on $\xi$, $f$ and $\delta$. 
If $k$ satisfies
\begin{align*}
k &\ge 8\max\left\{1, \frac{40 M^2}{(2\min\{r_M, r_0\})^{2\beta} \check{C}^2} \right\}\log(4/\delta) D\cdot\log n,\\
k &\le (4\sigma^2/\hat{C})^{2D/(2\beta + D)} \cdot (D\log n + \log(4/\delta))^{\beta/(2\beta + D)} \\ &\hspace{1cm} \cdot  (2\gamma \cdot p_{X, 0} \cdot v_D)^{2\beta/(2\beta+D)}\cdot n^{2\beta/(2\beta + D)},
\end{align*}
then with probability at least $1 - 2\delta$,
\begin{align*}
&d_H(L_f(\lambda), \widehat{L}_f(\lambda)) \\ &\le 2\cdot \left(\frac{24M}{\check{C}}\right)^{1/\beta}\cdot (D\log n\cdot \log(2/\delta))^{1/2\beta} \cdot k^{-1/2\beta}.
\end{align*}
\end{theorem}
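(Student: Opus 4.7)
The plan is to apply Theorem~\ref{theo:knn} with the data-driven variance surrogate $\epsilon$, and then combine it with the two-sided $\beta$-regularity of $f$ around $\partial L_f(\lambda)$ to translate the sup-norm regression bound into a Hausdorff bound. The Hausdorff distance splits into two inclusions that I would handle by symmetric contrapositive arguments.

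As a preliminary step, I would control the data-driven term $\hat{\sigma}$. Since $\hat{\sigma}^2=\frac{2}{n}\sum_i y_i^2$ is an empirical average of sub-exponential variables with mean $2M^2$, and $M^2=\mathbb{E}[f(X)^2]+\mathbb{E}[\xi^2]\ge\sigma^2$, a Bernstein-type concentration gives $\sigma\le\hat{\sigma}\le 2M$ with probability at least $1-\delta$ for $n$ sufficiently large. Hence $\epsilon=4\hat{\sigma}\sqrt{(D\log n+\log(2/\delta))/k}$ upper-bounds twice the variance term of Theorem~\ref{theo:knn}, which then gives, uniformly in $x\in\mathcal{X}$,
\begin{align*}
|f_k(x)-f(x)|\le u_f(x,r_n)+\epsilon/2,
\end{align*}
with $r_n=(2k/(\gamma p_{X,0}v_D n))^{1/D}$. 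The hypothesis upper-bounding $k$ by something of order $n^{2\beta/(2\beta+D)}$ is calibrated precisely so that $\hat{C}r_n^\beta\lesssim\epsilon$, keeping the bias at the same scale as $\epsilon$ in the regularity zone $\partial C\oplus r_M$.

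For the direction $\widehat{L}_f(\lambda)\subseteq L_f(\lambda)\oplus r$, set $r:=2(3\epsilon/(2\check{C}))^{1/\beta}$ and suppose for contradiction that some $x_i\in\widehat{L}_f(\lambda)$ has $d(x_i,L_f(\lambda))>r$. For $n$ large enough that $r\ge 2r_n$ and $r\le r_M$, every $x_j\in N_k(x_i)\subset B(x_i,r_n)$ satisfies $d(x_j,\partial C)\ge r-r_n\ge r/2$ and sits in the regularity zone, so the lower regularity bound gives $\lambda-f(x_j)\ge\check{C}(r/2)^\beta$. Averaging and using the uniform stochastic bound $|(1/k)\sum_j\xi_j|\le\epsilon/2$ (from the variance half of Theorem~\ref{theo:knn}, which is the VC/covering plus sub-Gaussian Chernoff argument applied to the centered noise sums) yields $f_k(x_i)\le\lambda-\check{C}(r/2)^\beta+\epsilon/2$. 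Combining with $f_k(x_i)\ge\lambda-\epsilon$ forces $\check{C}(r/2)^\beta\le 3\epsilon/2$, which is exactly the choice of $r$; substituting $\epsilon\le 8M\sqrt{(D\log n+\log(2/\delta))/k}$ recovers the theorem's bound up to constants.

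The reverse inclusion $L_f(\lambda)\subseteq\widehat{L}_f(\lambda)\oplus r$ is dual. For $x^*\in L_f(\lambda)$ I would produce a sample $x_i$ with $|x_i-x^*|\le r$ and $d(x_i,\partial C)\ge r/2$: first pick an inward anchor $x^{**}\in L_f(\lambda)$ with $d(x^{**},\partial C)\ge r/2+r_n$ and $|x^{**}-x^*|\le r-r_n$, then take any sample inside $B(x^{**},r_n)$, whose existence follows from the uniform $k$-NN radius bound used in proving Theorem~\ref{theo:knn}. At this $x_i$, the lower regularity bound gives $f(x_i)\ge\lambda+\check{C}(r/2)^\beta$, while the two-sided regularity produces $u_f(x_i,r_n)\le 2\hat{C}(d(x_i,\partial C)+r_n)^\beta$; the upper bound on $k$ absorbs this bias into $\check{C}(r/2)^\beta+\epsilon/2$, so $f_k(x_i)\ge\lambda-\epsilon$ and $x_i\in\widehat{L}_f(\lambda)$. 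The main obstacle in this direction is producing the inward anchor $x^{**}$: its existence relies on the implicit geometric fact that a $\beta$-regular level-set component admits, near each boundary point, an inward point of prescribed small depth, which must be extracted from the regularity hypothesis (or handled by assuming $n$ sufficiently large). The accompanying constant-chasing—absorbing an $\hat{C}$-scaled bias into a $\check{C}$-scaled margin through the $k$-upper-bound calibration—is the most delicate bookkeeping of the proof.
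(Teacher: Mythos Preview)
Your handling of the concentration of $\hat\sigma$ and of the first inclusion $\widehat L_f(\lambda)\subseteq L_f(\lambda)\oplus r$ matches the paper's argument essentially line for line (the paper phrases it as a direct sup bound using Corollary~\ref{corr:knnbounds} rather than a contrapositive, but the content is the same).

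For the reverse inclusion $L_f(\lambda)\subseteq\widehat L_f(\lambda)\oplus r$, however, you take a harder road than necessary and create a real obstacle. You try to place the witnessing sample $x_i$ at depth $\ge r/2$ inside the level set via an ``inward anchor'' $x^{**}$, then invoke the \emph{lower} regularity constant $\check C$ to push $f(x_i)$ above $\lambda$ by a margin. As you correctly flag, the existence of such an anchor at the stated depth and distance does not follow from $\beta$-regularity alone (thin components can defeat it at your constants), so this step is a genuine gap. The paper sidesteps this entirely: for this direction it uses the much smaller radius $\bar r=\epsilon_k$, takes \emph{any} sample $x'\in B(x^*,\epsilon_k)$ (whose existence is the $k$-NN radius bound of Lemma~\ref{rkbound}), and lower-bounds $f_k(x')$ by $\inf_{x''\in B(x^*,2\epsilon_k)}f(x'')-\varepsilon_{\text{var}}$. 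The point is that every $x''$ in this ball is either already in $L_f(\lambda)$, hence $f(x'')\ge\lambda$, or lies outside but within $2\epsilon_k$ of $\partial C$, where the \emph{upper} regularity bound gives $f(x'')\ge\lambda-\hat C(2\epsilon_k)^\beta$. The upper bound on $k$ is calibrated so that $\hat C(2\epsilon_k)^\beta+\varepsilon_{\text{var}}\le\epsilon$, and hence $f_k(x')\ge\lambda-\epsilon$. No inward anchor, no use of $\check C$ in this direction, and no geometric side condition is needed.
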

\begin{remark}
Although the statement may appear obfuscated, it essentially says that as long as $f$ is a continuous function satisfying $\beta$-regularity at level $\lambda$, then if $k$ lies within the following range:
\begin{align*}
    \log n \lesssim k \lesssim n^{2\beta/(2\beta + D)},
\end{align*}
then with high probability,
\begin{align*}
    d_H(L_f(\lambda), \hat{L}_f(\lambda)) \lesssim k^{-1/(2\beta)}.
\end{align*}
\end{remark}

\begin{remark}
Choosing $k$ at the optimal setting $k \approx n^{2\beta/(2\beta + D)}$, we have $\epsilon = \widetilde{O}(n^{-\beta/(2\beta + D)})$.
Then it follows that we recover the level sets at a Hausdorff rate of $\widetilde{O}(n^{-1/(2\beta + D)})$.
This can be compared to the lower bound $O(n^{-1/(2\beta + D)})$ established by \citet{tsybakov1997nonparametric} for estimating the level sets of an unknown density.
\end{remark}

We can give a similar result when the data lies on a lower dimensional manifold. Interestingly, we can use the exact same estimator as before as if we were operating in the full dimensional space.
\begin{theorem}[Level Set Recovery on Manifolds]\label{theo:levelset-manifold}
Suppose that Assumptions~\ref{a1},~\ref{a2},~\ref{a3}, and~\ref{manifold} hold. Let $f$ be continuous and
satisfy $\beta$-regularity at level $\lambda$.
Define $M := \sqrt{\mathbb{E}[y_1^2]}$ where the expectation is taken over $p_X$ and $\xi$, and suppose that $n$ is sufficiently large depending on $\xi$, $f$, $\tau$, and $\delta$. 
If $k$ satisfies
\begin{align*}
k &\ge 8\max\left\{1, \frac{40 M^2}{(2\min\{r_M, r_0\})^{2\beta} \check{C}^2} \right\}\log(4/\delta) D\cdot\log n,\\
k &\le (4\sigma^2/\hat{C})^{2d/(2\beta + d)} \cdot (D\log n + \log(4/\delta))^{\beta/(2\beta + d)}  \\ &\hspace{1cm} \cdot  ( p_{X, 0} \cdot v_D)^{2\beta/(2\beta+d)}\cdot n^{2\beta/(2\beta + d)},
\end{align*}
then with probability at least $1 - 2\delta$,
\begin{align*}
&d_H(L_f(\lambda), \widehat{L}_f(\lambda)) \\
&\le 2\cdot \left(\frac{24M}{\check{C}}\right)^{1/\beta}\cdot (D\log n\cdot \log(2/\delta))^{1/2\beta} \cdot k^{-1/2\beta}.
\end{align*}
\end{theorem}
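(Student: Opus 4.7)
The plan is to mirror the proof of Theorem~\ref{theo:levelset}, substituting the full-dimensional $k$-NN regression rate of Theorem~\ref{theo:knn} with its manifold analogue (Theorem~\ref{theo:knnmanifold}) and using a manifold-based covering argument in place of the Euclidean one. At a high level, the proof rests on three ingredients: (i) a high-probability bound $\hat{\sigma} \ge \sigma$ so that $\epsilon$ is a valid uniform upper bound on twice the stochastic error of $f_k$; (ii) the uniform $k$-NN bound on the manifold; and (iii) a translation of pointwise errors $|f_k(x) - f(x)|$ into spatial error via $\beta$-regularity.

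First I would handle $\hat{\sigma}$. Since $f$ is bounded on the compact support $M$ and $\xi$ is sub-Gaussian, $y_i^2$ has finite moments with $\mathbb{E}[y_i^2] = M^2$, and a Bernstein-type inequality shows that for $n$ sufficiently large (depending on $f$, $\xi$, and $\delta$), $\hat{\sigma}^2 = (2/n)\sum y_i^2 \ge \mathbb{E}[y_1^2] \ge \sigma^2$ with probability at least $1-\delta$. Intersecting with the event of Theorem~\ref{theo:knnmanifold} (valid because Assumptions~\ref{a2}, \ref{a3}, \ref{manifold} all hold), I obtain uniformly over $x \in M$
\begin{align*}
|f_k(x) - f(x)| \;\le\; u_f\!\left(x,\, r_n\right) + \tfrac{1}{2}\epsilon,
\qquad r_n := \left(\tfrac{4k}{v_d\, n\, p_{X,0}}\right)^{\!1/d}.
\end{align*}
Near the boundary of any component $C \subseteq L_f(\lambda)$, the upper half of $\beta$-regularity gives $u_f(x, r) \le \hat{C}(d(x, \partial C) + r)^\beta$. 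The stated upper bound on $k$ is calibrated so that $\hat{C} r_n^\beta$ is absorbed into $\epsilon/2$, and hence $|f_k(x) - f(x)| \le \epsilon$ uniformly in a neighbourhood of $\partial L_f(\lambda)$.

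Next I would prove the two Hausdorff inclusions. For $\widehat{L}(\lambda) \subseteq L_f(\lambda) \oplus r^*$, take $x_i \in \widehat{L}(\lambda)$, so $f_k(x_i) \ge \lambda - \epsilon$. Either $f(x_i) \ge \lambda$, in which case $x_i \in L_f(\lambda)$ directly, or $\lambda - f(x_i) \le |f_k(x_i) - f(x_i)| + \epsilon \le 2\epsilon$; the lower half of $\beta$-regularity, $\check{C}\,d(x_i, \partial C)^\beta \le \lambda - f(x_i)$, then yields $d(x_i, L_f(\lambda)) \le (2\epsilon/\check{C})^{1/\beta}$. For the reverse inclusion $L_f(\lambda) \subseteq \widehat{L}(\lambda) \oplus r^*$, fix $x \in L_f(\lambda)$ and choose a radius $r^*$ of order $(2\epsilon/\check{C})^{1/\beta}$. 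Using the manifold ball volume lower bound from the proof of Theorem~\ref{theo:knnmanifold} (valid when $r^* \le \min\{\tau/(4d), 1/\tau\}$) together with a VC-type uniform sample concentration on manifold balls, every such $x$ has some sample $x_i$ with $|x - x_i| \le r^*$ with probability $1-\delta$. At this $x_i$ one of two things happens: if $f(x_i) \ge \lambda$ then $f_k(x_i) \ge \lambda - \epsilon$ and $x_i \in \widehat{L}(\lambda)$; otherwise $x_i$ lies within the boundary neighbourhood and $f(x_i) \ge \lambda - \hat{C}(r^*)^\beta$, giving $f_k(x_i) \ge \lambda - \epsilon - \hat{C}(r^*)^\beta$, which is at least $\lambda - \epsilon$ once constants are fixed appropriately. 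Plugging in $\epsilon = 4\hat{\sigma}\sqrt{(D\log n + \log(2/\delta))/k}$ and the bound $\hat{\sigma} \le O(M)$ yields the stated rate $2(24M/\check{C})^{1/\beta}(D\log n\cdot\log(2/\delta))^{1/(2\beta)} k^{-1/(2\beta)}$.

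The main obstacle is the covering step of the reverse inclusion on the manifold: one needs a uniform guarantee that every $x \in L_f(\lambda)$ has a sample inside a small manifold ball, where the radius must match the Hausdorff rate $k^{-1/(2\beta)}$ up to logs. This is exactly where the intrinsic dimension $d$ (instead of $D$) enters — the manifold ball volume scales as $r^d$, not $r^D$, and so the required radius $r^*$ is achievable only after invoking the condition-number constraint $r^* \le \min\{\tau/(4d), 1/\tau\}$, which is encoded in the upper bound on $k$. Careful bookkeeping of the constants in Assumption~\ref{manifold}, together with a union bound over the $\hat{\sigma}$ concentration event and the uniform $k$-NN event, delivers the total failure probability $2\delta$.
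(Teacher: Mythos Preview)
Your overall strategy is exactly the paper's: the authors' proof is literally one sentence stating that the argument of Theorem~\ref{theo:levelset} goes through verbatim once the full-dimensional $k$-NN bounds (Corollary~\ref{corr:knnbounds} with $\varepsilon_k = (2k/(\gamma p_{X,0} v_D n))^{1/D}$) are replaced by the manifold analogues built on Lemma~\ref{rkboundmanifold}, and you have reproduced that structure together with the $\hat\sigma$ concentration step.

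There is, however, one concrete slip in your reverse inclusion. You choose the covering radius $r^*$ ``of order $(2\epsilon/\check{C})^{1/\beta}$'' and then write $f_k(x_i) \ge \lambda - \epsilon - \hat{C}(r^*)^\beta$, ``which is at least $\lambda - \epsilon$ once constants are fixed appropriately.'' No choice of constants can make $\hat{C}(r^*)^\beta \le 0$, so this step fails as written. The paper's proof of Theorem~\ref{theo:levelset} avoids this by covering at the much smaller radius $\bar r = \varepsilon_k$ (here, $r_n = (4k/(v_d\, n\, p_{X,0}))^{1/d}$): then $\inf_{x'\in B(x,\bar r)} f_k(x') \ge \lambda - \hat{C}(\bar r + r_n)^\beta - \varepsilon_{\text{var}} = \lambda - \hat{C}(2r_n)^\beta - \varepsilon_{\text{var}}$, and it is precisely the upper bound on $k$ that forces $\hat{C}(2r_n)^\beta + \varepsilon_{\text{var}} \le \epsilon$. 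Since $\bar r \le \tilde r = 2(2\epsilon/\check{C})^{1/\beta}$, the Hausdorff bound is unaffected. Make that single substitution and your argument is complete and identical to the paper's.
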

\begin{remark}
The main difference from the full-dimensional version is that we need $k$ to satisfy
\begin{align*}
    \log n \lesssim k \lesssim n^{2\beta/(2\beta + d)}.
\end{align*}
Choosing $k$ at the optimal setting $k \approx n^{2\beta/(2\beta + d)}$, we recover the level sets at a rate of $\widetilde{O}(n^{-1/(2\beta + d)})$.
\end{remark}
Remarkably, we obtain the rate as if we were operating on the lower dimensional space. This has not been shown for level-set estimation on manifolds for density functions (which is a different problem). 

The rate for density functions under similar regularity assumptions is
$\widetilde{O}(n^{-1/(2\beta + d\cdot \max\{1, \beta\})})$ \cite{jiang2017density}, which is slower. In other words, we escape the curse of dimensionality with regression level-set estimation but do not escape it for density level-set estimation.

\section{Global Maxima Estimation}
In this section, we give guarantees on estimating the global maxima of $f$.
\begin{definition}
$x_0$ is a maxima of $f$ if $f(x) < f(x_0)$ for all $x \in B(x_0, r) \backslash \{ x_0 \}$ for some $r > 0$.
\end{definition}
We then make the following assumptions, which states that $f$ has a unique maxima, where it has a negative-definite Hessian.
\begin{assumption}\label{maximassumption}
$f$ has a unique maxima $x_0 := \argmax_{x \in \mathcal{X}} f(x)$ and
$f$ has a negative-definite Hessian at $x_0$. 
\end{assumption}

These assumptions lead to the following, which states that $f$ has quadratic smoothness and decay around $x_0$.
\begin{lemma} [\citet{dasgupta2014optimal}] \label{maximaconditions} Let $f$ satisfy Assumption~\ref{maximassumption}.
Then there exists $\hat{C}, \check{C}, r_M, \lambda > 0$ such that the following holds.
\begin{align*}
\check{C} \cdot |x_0 - x|^2 \le f(x_0) - f(x) \le \hat{C} \cdot |x_0 - x|^2
\end{align*}
for all $x \in A_0$ where $A_0$ is a connected component of $\{ x : f(x) \ge \lambda \}$ and 
$A_0$ contains $B(x_0, r_M)$.
\end{lemma}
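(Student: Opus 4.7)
The plan is to combine a second order Taylor expansion at $x_0$ with a careful choice of the parameters $\lambda$ and $r_M$, in that order. Because $H := \nabla^2 f(x_0)$ is negative-definite, writing $\mu_{\min}$ and $\mu_{\max}$ for the smallest and largest eigenvalues of $-H$ gives
$$\tfrac{\mu_{\min}}{2}|x-x_0|^2 \;\le\; -\tfrac{1}{2}(x-x_0)^{\Tr} H (x-x_0) \;\le\; \tfrac{\mu_{\max}}{2}|x-x_0|^2.$$
Since $x_0$ is a (global) maximum, $\nabla f(x_0) = 0$, and the Peano form of Taylor's theorem yields $f(x_0) - f(x) = -\tfrac{1}{2}(x-x_0)^{\Tr}H(x-x_0) + o(|x-x_0|^2)$ as $x \to x_0$. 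Thus I can pick $r' > 0$ small enough that the remainder is bounded by $(\mu_{\min}/4)\,|x-x_0|^2$ on $B(x_0, r')$, producing the two-sided inequality
$$\tfrac{\mu_{\min}}{4}\,|x-x_0|^2 \;\le\; f(x_0) - f(x) \;\le\; \mu_{\max}\,|x-x_0|^2$$
for every $x \in B(x_0, r')$. This already identifies the candidate constants $\check{C} = \mu_{\min}/4$ and $\hat{C} = \mu_{\max}$.

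Next I choose $\lambda$ so that the connected component $A_0$ of $\{f \ge \lambda\}$ through $x_0$ is confined to $B(x_0, r'/2)$. From the lower bound above, any point $x$ on the sphere of radius $r'/2$ about $x_0$ satisfies $f(x) \le f(x_0) - \check{C}(r'/2)^2$. Setting
$$\lambda \;:=\; f(x_0) - \tfrac{\check{C}}{2}(r'/2)^2,$$
the entire sphere $\partial B(x_0, r'/2)$ lies strictly below $\lambda$, so $\{f \ge \lambda\}$ is disjoint from it. Because $A_0$ is connected and contains $x_0$, it cannot cross this sphere, and therefore $A_0 \subseteq B(x_0, r'/2) \subseteq B(x_0, r')$, which is exactly what is needed for the two-sided bound to hold for every $x \in A_0$.

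Finally, to produce $r_M > 0$ with $B(x_0, r_M) \subseteq A_0$, I use the upper Taylor bound in reverse: choose $r_M \le \sqrt{(f(x_0) - \lambda)/\hat{C}}$, possibly shrinking so that $B(x_0, r_M) \subseteq \mathcal{X}$. Then every $x \in B(x_0, r_M)$ satisfies $f(x) \ge f(x_0) - \hat{C}\,r_M^2 \ge \lambda$, so $B(x_0, r_M) \subseteq \{f \ge \lambda\}$; since this ball is connected and meets $A_0$ at $x_0$, it is contained in $A_0$.

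The main obstacle is not the calculus but the ordering of the constructions: $r'$ must be chosen first (to absorb the Peano remainder into the quadratic term), then $\lambda$ (to trap the component $A_0$ inside $B(x_0, r'/2)$), and only then $r_M$ (read off from $\hat{C}$ and $f(x_0) - \lambda$). A secondary subtlety is the global topology of $\{f \ge \lambda\}$: even though this superlevel set may contain pieces far from $x_0$, uniqueness of the maximum combined with the fact that $\{f \ge \lambda\}$ misses the sphere $\partial B(x_0, r'/2)$ lets a one-line connectedness argument rule out any escape of $A_0$ from the small ball.
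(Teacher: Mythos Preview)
The paper does not prove this lemma at all; it simply quotes it from \citet{dasgupta2014optimal} and uses it as a black box in the proof of Theorem~\ref{theo:maxima}. Your argument is a correct self-contained proof and is essentially the standard one: the Peano-form Taylor expansion together with the negative-definiteness of $H$ gives the local two-sided quadratic bound, and your ordering of the constructions (first $r'$ to absorb the remainder, then $\lambda$ to trap $A_0$, then $r_M$ read off from $\hat{C}$ and $f(x_0)-\lambda$) is exactly right. Two minor points worth making explicit: (i) the Taylor step tacitly uses that $x_0$ lies in the interior of $\mathcal{X}$ (already needed for the Hessian to exist and for $\nabla f(x_0)=0$), so you should shrink $r'$ once more to guarantee $B(x_0,r')\subseteq\mathcal{X}$, after which the connectedness argument preventing $A_0$ from crossing $\partial B(x_0,r'/2)$ is airtight; (ii) uniqueness of the global maximum is not actually used in your containment argument---connectedness of $A_0$ alone does the work---so you can drop that remark. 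Note also that the statement asserts $\lambda>0$, which your construction does not guarantee in general (it gives $\lambda=f(x_0)-\tfrac{\check C}{2}(r'/2)^2$); this is a harmless artifact of the statement being imported from the density setting, and nothing downstream in the paper depends on the sign of $\lambda$.
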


We utilize the following estimator, which is the maximizer of $f_k$ amongst sample points $X = \{x_1,...,x_n\}$.
\begin{align*}
\widehat{x} := \argmax_{x \in X} f_k(x).
\end{align*}
We next give the result of the accuracy of $\widehat{x}$ in estimating $x_0$.

\begin{theorem} \label{theo:maxima} 
Suppose that $f$ is continuous and that Assumptions~\ref{a1},~\ref{a2},~\ref{a3}, and~\ref{maximassumption} hold.
Let $k$ satisfy
\begin{align*}
& k \ge \frac{2^{10} \cdot D \log^2(4/\delta) \cdot \log n}{\min\{1, \check{C}^2\cdot  r_M^4 / \sigma^2 \}} \\
&  k \le
 \frac{1}{2} \cdot \gamma \cdot p_{X,0}\cdot  v_D \cdot \min\left\{r_0^D, \left(\frac{\check{C}\cdot r_M^2}{32\cdot \hat{C}}\right)^{D/2} \right\} \cdot n.
\end{align*}
Then the following holds with probability at 
least $1 - \delta$.
\begin{align*}
|\hat{x} - x_0|^2 \le \max \bigg\{ &\frac{32 \sigma}{\check{C}} \sqrt{ \frac{D \log n + \log(2/\delta)}{k}}, \\ &\frac{32\hat{C}}{\check{C}}  \bigg(\frac{2k}{\gamma\cdot p_{X, 0} \cdot v_D\cdot n}\bigg)^{2/D} \bigg\}.
\end{align*}
\end{theorem}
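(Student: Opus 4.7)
The plan is to exploit the optimality $f_k(\hat x)\ge f_k(x_*)$ for a reference sample $x_*$ close to $x_0$, combined with a decomposition of $f_k$ that separates its quadratic bias from its sub-Gaussian noise. Writing $f_k(x)=f(x_0)-A(x)+Z(x)$ with $A(x):=\frac{1}{k}\sum_{x_i\in N_k(x)}(f(x_0)-f(x_i))\ge 0$ and $Z(x):=\frac{1}{k}\sum_{x_i\in N_k(x)}\xi_i$, the proof of Theorem~\ref{theo:knn} already delivers two uniform guarantees under our lower bound on $k$ that I will reuse: (i) $\sup_{x\in\mathcal{X}} r_k(x)\le r_n:=(2k/(\gamma p_{X,0}v_D n))^{1/D}$, so some sample $x_*\in X$ satisfies $|x_*-x_0|\le r_n$; and (ii) $\sup_{x\in\mathcal{X}}|Z(x)|\le V_n/2$, where $V_n:=2\sigma\sqrt{(D\log n+\log(2/\delta))/k}$.

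The hypothesis $k\le\frac{1}{2}\gamma p_{X,0}v_D(\check C r_M^2/(32\hat C))^{D/2}n$ forces $2r_n\le r_M$, so every $x_i\in N_k(x_*)$ lies in $B(x_0,2r_n)\subseteq A_0$, and the upper quadratic bound of Lemma~\ref{maximaconditions} gives $A(x_*)\le 4\hat C r_n^2$ and thus $f_k(x_*)\ge f(x_0)-4\hat C r_n^2-V_n/2$. Maximality of $\hat x$ then yields
\begin{align*}
A(\hat x)\le A(x_*)+|Z(\hat x)-Z(x_*)|\le 4\hat C r_n^2+V_n.
\end{align*}
Because $A(\hat x)$ is an average of non-negative terms, some $x_{i^*}\in N_k(\hat x)$ satisfies $f(x_0)-f(x_{i^*})\le 4\hat C r_n^2+V_n$; for $n$ large this is smaller than $f(x_0)-\lambda$, placing $x_{i^*}$ in $A_0$ by continuity, whence $|x_{i^*}-x_0|^2\le(4\hat C r_n^2+V_n)/\check C$ by the lower quadratic bound. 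The triangle inequality then confines $\hat x$ to $B(x_0,r_M/2)$ (for $n$ sufficiently large), which guarantees $N_k(\hat x)\subseteq B(x_0,r_M)\subseteq A_0$.

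Let $\rho:=|\hat x-x_0|$. If $\rho\ge 2r_n$, the reverse triangle inequality gives $|x_i-x_0|\ge\rho/2$ for every $x_i\in N_k(\hat x)$, so the lower quadratic bound yields $A(\hat x)\ge\check C\rho^2/4$; combining with the displayed inequality,
\begin{align*}
\rho^2\le 16\hat C r_n^2/\check C+4V_n/\check C\le\max\{32\hat C r_n^2/\check C,\ 8V_n/\check C\}.
\end{align*}
If instead $\rho<2r_n$, then $\rho^2<4r_n^2\le 32\hat C r_n^2/\check C$ since $\hat C\ge\check C$. Substituting $V_n$ explicitly recovers the maximum in the theorem statement. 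The main technical obstacle is the a priori confinement $\hat x\in B(x_0,r_M/2)$; without it, neighbors of $\hat x$ may escape $A_0$ and the lower quadratic bound cannot be applied pointwise inside the average $A(\hat x)$. This is exactly what the refined hypothesis $k\le\frac{1}{2}\gamma p_{X,0}v_D\min\{r_0^D,(\check C r_M^2/(32\hat C))^{D/2}\}n$ (together with the largeness of $n$ absorbing continuity constants) is engineered to enforce.
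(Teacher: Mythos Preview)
Your argument and the paper's share the same core ingredients (the uniform bound $r_k(x)\le r_n$, the uniform noise bound, and the two-sided quadratic control from Lemma~\ref{maximaconditions}), but the packaging is different. The paper defines the target radius $\tilde r$ up front and proves the single comparison
\[
\sup_{x\in\mathcal X\setminus B(x_0,\tilde r)} f_k(x)\;<\;\inf_{x\in B(x_0,r_n)} f_k(x),
\]
using Corollary~\ref{corr:knnbounds} to pass from $f_k$ to $f$ and then invoking the quadratic bounds on $f$ directly. This yields $\hat x\in B(x_0,\tilde r)$ in one stroke. You instead track the average $A(\hat x)=\frac1k\sum_{x_i\in N_k(\hat x)}(f(x_0)-f(x_i))$ and close with a case split on whether $\rho\ge 2r_n$. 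The advantage of the paper's route is that it never needs to locate the individual neighbors of $\hat x$ inside $A_0$: the inequality $\sup_{x\notin B(x_0,\tilde r/2)} f(x)\le f(x_0)-\check C(\tilde r/2)^2$ is a statement about $f$ on all of $\mathcal X$, which follows from the quadratic lower bound on $A_0$ together with $f<\lambda\le f(x_0)-\check C r_M^2$ outside $A_0$. Your route, by contrast, applies the lower quadratic bound termwise inside $A(\hat x)$, so you must first confine $N_k(\hat x)\subset A_0$.

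That confinement step is where your proposal has a genuine gap. You deduce $f(x_{i^*})>\lambda$ and then write ``placing $x_{i^*}$ in $A_0$ by continuity,'' but $f(x_{i^*})>\lambda$ only puts $x_{i^*}$ in the superlevel set $\{f\ge\lambda\}$, not in the particular component $A_0$; continuity alone does not rule out other components. You need to invoke the unique-maximum part of Assumption~\ref{maximassumption}: any other component of $\{f\ge\lambda\}$ would contain a local maximizer of $f$ distinct from $x_0$. Alternatively, you can bypass this entirely by adopting the paper's sup/inf comparison, which uses only that $f\le f(x_0)-\check C(\tilde r/2)^2$ off $B(x_0,\tilde r/2)$ and never needs $N_k(\hat x)\subset A_0$. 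Finally, your appeals to ``for $n$ sufficiently large'' are unnecessary: the stated lower and upper bounds on $k$ already force $4\hat C r_n^2\le \check C r_M^2/8$ and $V_n\lesssim \check C r_M^2$, so the required inequalities hold under the theorem's hypotheses without an additional largeness assumption.
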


\begin{remark}
Taking $k \approx n^{4/(4+D)}$ optimizes the above expression so that $|\hat{x} - x_0| \lesssim \widetilde{O}(n^{-1/(4+D)})$. This can be
compared to the minimax rate for mode estimation $O(n^{-1/(4+D)})$ established by \citet{tsybakov1990recursive}. We stress however that estimating the mode of density function is a different problem.
\end{remark}

\begin{remark}
An analogue for global minima also holds. Moreover, in the manifold setting, we can obtain a rate of $\widetilde{O}(n^{-1/(4+d)})$, which has not been shown for mode estimation in densities.
\end{remark}

\section{Proofs}

\subsection{Proof of Theorem~\ref{theo:knn}}
The follow bounds $r_k(x)$ uniformly in $x \in \mathcal{X}$.
\begin{lemma} \label{rkbound}
The following holds with probability at least $1 - \delta/2$.
If

\begin{align*}
2^8 \cdot D \log^2(4/\delta) \cdot \log n \le k \le \frac{1}{2} \cdot \gamma  \cdot p_{X,0}\cdot v_D \cdot r_0^D \cdot n,
\end{align*}
then
$\sup_{x \in \mathcal{X}} r_k(x) \le \left( \frac{2k}{\gamma \cdot v_D \cdot n \cdot p_{X, 0}}\right)^{1/D}$.
\end{lemma}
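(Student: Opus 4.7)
\subsection*{Proof proposal for Lemma~\ref{rkbound}}

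The plan is to show that the specified radius $r^* := \bigl(\tfrac{2k}{\gamma \cdot v_D \cdot n \cdot p_{X,0}}\bigr)^{1/D}$ is large enough that every ball $B(x, r^*)$ with $x \in \mathcal{X}$ contains at least $k$ sample points, simultaneously for all $x$. Once that is established, the definition of $r_k(x)$ as the infimum radius collecting $k$ points immediately gives $r_k(x) \le r^*$ uniformly.

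First I would lower bound the true mass $P(B(x,r^*))$ of such a ball. The upper bound on $k$ in the hypothesis is exactly engineered so that $r^* \le r_0$, which lets me invoke the support regularity (Assumption~\ref{a1}): $\text{Vol}(B(x,r^*) \cap \mathcal{X}) \ge \gamma \cdot v_D \cdot (r^*)^D$. Combined with the lower density bound $p_{X,0}$ from Assumption~\ref{a2},
\begin{align*}
P(B(x,r^*)) \;\ge\; p_{X,0} \cdot \gamma \cdot v_D \cdot (r^*)^D \;=\; \frac{2k}{n}.
\end{align*}

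Next I would invoke a uniform (VC-type) concentration inequality for the class of Euclidean balls, which has VC dimension of order $D$. A standard one-sided relative VC bound gives that, with probability at least $1 - \delta/2$, every ball $B$ satisfies
\begin{align*}
P_n(B) \;\ge\; P(B) \;-\; C\sqrt{\tfrac{P(B)\,(D \log n + \log(2/\delta))}{n}},
\end{align*}
for some absolute constant $C$. Applying this to $B = B(x,r^*)$ and using $P(B(x,r^*)) \ge 2k/n$, it suffices to verify that the deviation is at most $k/n$, i.e.\ $P(B(x,r^*)) \ge 4C^2(D\log n + \log(2/\delta))/n$. The lower bound $k \ge 2^8 D \log^2(4/\delta)\log n$ is precisely what makes this hold with room to spare, yielding $|B(x,r^*)\cap X| = n \cdot P_n(B(x,r^*)) \ge k$ uniformly.

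The main obstacle I anticipate is just bookkeeping of constants: matching the numerical factor $2^8$ (and the $\log^2(4/\delta)$) to the particular form of the relative VC inequality being used, so that the clean bound $P_n \ge P/2$ falls out directly. If one instead tries to use Bernstein or Chernoff pointwise plus a covering argument over $\mathcal{X}$, the logarithmic factors become harder to tame; the relative VC bound handles both the uniformity in $x$ and the variance-adaptive nature of the deviation in one stroke, which is why I would use it rather than a cruder union bound.
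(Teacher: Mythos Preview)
Your proposal is correct and matches the paper's proof essentially line for line: the paper sets the same radius $r^*$, lower bounds $\mathcal{P}(B(x,r^*)) \ge 2k/n$ via Assumptions~\ref{a1} and~\ref{a2}, and then invokes a relative VC-type concentration bound (specifically, Lemma~7 of \citet{chaudhuri2010rates}) to conclude $\mathcal{P}_n(B(x,r^*)) \ge k/n$ uniformly. Your remark that the upper bound on $k$ is what ensures $r^* \le r_0$ is a detail the paper leaves implicit but is indeed needed.
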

\begin{proof}
Let $r = \left( \frac{2k}{\gamma \cdot v_D \cdot n \cdot p_{X, 0}}\right)^{1/D}$. We have
$\mathcal{P}(B(x, r)) \ge \gamma \inf_{x' \in B(x, r) \cap \mathcal{X}} p_X(x') \cdot v_D r^D \ge \gamma p_{X, 0} v_D r^D = \frac{2k}{n}$.
By Lemma 7 of \cite{chaudhuri2010rates} and the condition on $k$, it follows that with probability $1 - \delta/2$, uniformly in $x \in \mathcal{X}$, 
$\mathcal{P}_n(B(x, r)) \ge \frac{k}{n}$. Hence, $r_k(x) < r$ and the result follows immediately.
\end{proof}

The next result bounds the number of distinct $k$-NN sets over $\mathcal{X}$.
\begin{lemma} \label{knncount}
Let $M$ be the number of distinct $k$-NN sets over $\mathcal{X}$, that is, $M := |\{ N_k(x) : x \in \mathcal{X} \}|$. 
Then $M \le D\cdot n^D$.
\end{lemma}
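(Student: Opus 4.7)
The plan is to observe that every $k$-NN set arises as the intersection of a closed Euclidean ball with the finite sample $X$, and then bound the number of such intersections via a VC-type combinatorial argument. Formally, since $N_k(x) = B(x, r_k(x)) \cap X$, the collection $\{N_k(x) : x \in \mathcal{X}\}$ is a sub-family of $\mathcal{F} := \{B(z,r) \cap X : z \in \mathbb{R}^D, \, r \ge 0\}$, so it suffices to upper bound $|\mathcal{F}|$.

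One route is to invoke the classical fact that the range space of closed balls in $\mathbb{R}^D$ has VC dimension at most $D+1$, proved by the standard lifting $z \mapsto (z, |z|^2)$ that turns balls into half-spaces in $\mathbb{R}^{D+1}$. Sauer--Shelah then yields $|\mathcal{F}| \le \sum_{i=0}^{D+1} \binom{n}{i}$, which for $n \ge D+1$ can be absorbed into the stated form $D\cdot n^D$.

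For a cleaner combinatorial bound matching the stated constant, I would argue directly by enumerating according to the identity of the boundary sample point $x_j \in X$ (i.e., the $k$-th nearest neighbor that lies on $\partial B(x, r_k(x))$). For each of the $n$ choices of $x_j$, the locus of centers for which $x_j$ serves as the $k$-th NN is carved up by the $n-1$ perpendicular bisecting hyperplanes $\{x : |x-y| = |x-x_j|\}$, $y \neq x_j$. The arrangement of $n-1$ hyperplanes in $\mathbb{R}^D$ has at most $\sum_{i=0}^{D}\binom{n-1}{i} \le n^D$ full-dimensional cells, and on each cell the entire ordering of distances $|x-y|$ versus $|x-x_j|$ is constant, hence $N_k(x)$ is constant. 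Summing over $x_j$ and collapsing duplicates that arise when several boundary-point labels produce the same $k$-NN set yields the claimed bound.

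The main obstacle is trimming the naive product $n \cdot n^D = n^{D+1}$ down to the stated $D\cdot n^D$. This requires careful bookkeeping of the overcount: one must argue that, across the $n$ choices of boundary point, each realized $k$-NN set is counted a sufficient number of times (roughly a factor of $n/D$) so that the factor in front reduces from $n$ to $D$. An alternative is to sidestep this by invoking Sauer--Shelah directly in the form $|\mathcal{F}| \le \binom{n}{\le D+1}$ and using the hypothesis $n$ sufficiently large (as already assumed throughout the paper) to swallow the extra factor.
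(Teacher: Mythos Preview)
Your proposal does not establish the stated bound $M \le D\cdot n^D$; both routes you sketch give only $O(n^{D+1})$. The assertion that the Sauer--Shelah bound $\sum_{i=0}^{D+1}\binom{n}{i}$ ``can be absorbed into the stated form $D\cdot n^D$'' is false: the dominant term $\binom{n}{D+1}$ is of order $n^{D+1}/(D+1)!$, which exceeds $D\cdot n^D$ once $n$ is large. Your closing suggestion to take $n$ sufficiently large to ``swallow the extra factor'' points in the wrong direction, since $n^{D+1}/(D\cdot n^D)=n/D\to\infty$. The second route (fix the boundary point $x_j$, then count cells of an arrangement of $n-1$ bisectors) likewise lands at $n\cdot n^D=n^{D+1}$, and you correctly flag that you do not know how to remove the extra factor of $n$; the overcounting heuristic you offer is not an argument. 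For the downstream union bound in Theorem~\ref{theo:knn} any polynomial bound would in fact suffice --- the cost is only a constant in front of the $\sqrt{\log n}$ term --- but the lemma as stated is not proved by what you wrote.

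The paper takes a different route: it considers the \emph{single} arrangement $\mathcal{A}$ of all $\binom{n}{2}$ perpendicular bisectors of pairs $x_i,x_j$. On each cell of $\mathcal{A}$ the entire distance ranking of the $n$ samples from the query point is fixed, so $N_k(\cdot)$ is constant on cells and $M\le|\mathcal{A}|$. The cell count $|\mathcal{A}|$ is then bounded by a sweeping argument along $D$ generic directions $e_1,\dots,e_D$: at stage $i$ the regions bounded below are associated to their lowest vertex, and the remaining unbounded regions are pushed to a lower-dimensional slice; the paper sums the contributions to obtain $\sum_{j=0}^{D}\binom{n}{j}\le D\cdot n^D$. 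The structural difference from your second approach is that working with the full bisector arrangement fixes the global ordering of all $n$ samples at once, so there is no outer sum over $n$ choices of boundary point to trim.
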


\begin{proof}
First, let $\mathcal{A}$ be the partitioning of $\mathcal{X}$ induced by the $\binom{n}{2}$ hyperplanes defined as the perpendicular bisectors
of each pair of points $x_i$, $x_j$ for $i \neq j$. Let us denote this set of hyperplanes as $\mathcal{H}$.
We have that if $x, x'$ are in the same partition of $\mathcal{A}$, then $N_k(x) = N_k(x')$. If not, then any path from $x$ to $x'$ must cross some perpendicular bisector in $N_k(x') - N_k(x)$, which would be a contradiction.
Thus, $M \le |\mathcal{A}|$.

Now we will bound $|\mathcal{A}|$. Since $\mathcal{H}$ is finite, choose vectors $e_1,...,e_D$ such that they form an orthogonal basis of $\mathbb{R}^D$ and none of these vectors are perpendicular to any $H \in \mathcal{H}$. 
Let $e_1,...,e_D$ induce hyperplanes $H_1,...,H_D$, respectively (i.e. $H_i$ being the orthogonal complement of $e_i$).
Without loss of generality, orient the space such that $e_1$ is the vertical direction (i.e. so that we can use descriptions such as 'above' and 'below').
For each region in $\mathcal{A}$ that is bounded below, associate such a region to its lowest point. Then it follows that there are at most $\binom{n}{D}$ of these regions since they are the intersection of $D$ hyperplanes.

We next count the regions unbounded below. Place $H_1$ below the lowest point corresponding the regions in $\mathcal{A}$ that were bounded below.
Then we have that the regions unbounded below are $\{ A \in \mathcal{A} : A \cap H_1 \neq \emptyset\}$. It thus remains now to count
$\mathcal{A}_{1} := \{ A\cap H_1 : A \in \mathcal{A}, A \cap H_1 \neq \emptyset\}$.

We now orient the space so that $e_2$ corresponds to the vertical direction. Then we can repeat the same procedure and for each region in $\mathcal{A}_1$that is bounded below with the lowest point. There are at most $\binom{n}{D - 1}$ since they are an intersection of $D-1$ hyperplanes in $\mathcal{H}$ along with $H_1$, and then placing $e_2$ sufficiently low, the remaining regions correspond to
$\mathcal{A}_{2} := \{ A\cap H_1 \cap H_2 : A \in \mathcal{A}, A \cap H_1 \cap H_2 \neq \emptyset\}$.

Continuing this process, it follows that when we orient $e_i$ to be the vertical direction,
in order to count $\mathcal{A}_{i} := \{ A\cap H_1 \cap \cdots \cap H_i : A \in \mathcal{A}, A\cap H_1 \cap \cdots \cap H_i \neq \emptyset\}$, the number of regions in $\mathcal{A}_i$ bounded below is at most $\binom{n}{D-i}$ and the remaining ones are correspond to $\mathcal{A}_{i+1}$. 

It thus follows that
$|\mathcal{A}| \le \sum_{j=0}^D \binom{n}{j} \le D \cdot n^D$,
as desired.
\end{proof}

\begin{proof} [Proof of Theorem~\ref{theo:knn}]
We have
\begin{align*}
&|f_k(x) - f(x)|
\le  \left|\frac{1}{|N_k(x)|}\sum_{i=1}^n (f(x_i)- f(x)) \cdot 1\left[ x_i \in N_k(x) \right] \right|  \\ &\hspace{1cm} + \left|\frac{1}{|N_k(x)|}\sum_{i=1}^n \xi_{x_i} \cdot 1\left[ x_i \in N_k(x) \right] \right|\\
&\le  u_f(x, r_k(x)) + \left|\frac{1}{N_k(x)}\sum_{i=1}^n \xi_{x_i} \cdot 1\left[ x_i \in N_k(x) \right] \right|.
\end{align*}
The first term can be viewed as the bias term and the second can be viewed as variance term.

By Lemma~\ref{rkbound}, we can bound the first term as follows with probability at least $1 - \delta/2$ uniformly
in $x \in \mathcal{X}$: $u_f(x, r_k(x)) \le u_f\left(x,\left(\frac{2k}{\gamma \cdot p_{X, 0} \cdot v_D\cdot n}\right)^{1/D}\right)$.
For the variance term, we have by Hoeffding's inequality that if 
$A_x := \left|\frac{1}{k}\sum_{i=1}^n \xi_{x_i} \cdot 1\left[ x_i \in N_k(x) \right] \right|$
then
$\mathbb{P}\left(A_x > \frac{\sqrt{2} \sigma \cdot t}{\sqrt{k}}\right) \le \exp \left( - t^2 \right)$.

Taking $t = \sqrt{D\log n + \log(2D/\delta)}$, then we have 
$\mathbb{P}\left(A_x > \frac{\sqrt{2} \sigma \cdot t}{\sqrt{k}}\right) \le \delta/ (2 D \cdot n^D)$.

By Lemma~\ref{knncount} and union bound, it follows that 
$\mathbb{P}\left(\sup_{x \in \mathcal{X}} A_x > \frac{\sqrt{2} \sigma \cdot t}{\sqrt{k}}\right) \le \delta/2$.
Hence, we have with probability at least $1 - \delta$,
\begin{align*}
|f(x) - f_k(x)| &\le u_f\left(x,\left(\frac{2k}{\gamma \cdot p_{X, 0}\cdot v_d\cdot n}\right)^{1/D}\right) \\
&+ 2\sigma \sqrt{\frac{D \log n + \log(2/\delta)}{k}}.
\end{align*}
uniformly in $x \in \mathcal{X}$.
\end{proof}

It is easy to see that a simple modification to the proof of Theorem~\ref{theo:knn} will yield the following.
\begin{corollary} [$k$-NN Regression Upper and Lower Bounds] \label{corr:knnbounds}
Let
\begin{align*}
\hat{u}_f(x, r) &:= \sup_{x' \in B(x, r)} f(x') - f(x) \\
\check{u}_f(x, r) &:= \sup_{x' \in B(x, r)} f(x) - f(x')\\
\varepsilon_{\text{var}} &:= 2\sigma \sqrt{ \frac{D \log n + \log(2/\delta)}{k}} \\
\varepsilon_k &:= \left(\frac{2k}{\gamma p_{X, 0} v_D\cdot n}\right)^{1/D}.
\end{align*}
Suppose that Assumptions~\ref{a1},~\ref{a2}, and~\ref{a3} hold and that
\begin{align*}
k \ge 2^8 \cdot D \log^2(4/\delta) \cdot \log n.
\end{align*}
Then probability at least $1 - \delta$, the following holds uniformly in $x \in \mathcal{X}$.
\begin{align*}
f_k(x) &\le f(x) + \hat{u}_f(x, \varepsilon_k) +  \varepsilon_{\text{var}}\\
f_k(x) &\ge f(x) - \check{u}_f (x, \varepsilon_k) -  \varepsilon_{\text{var}}.
\end{align*}
\end{corollary}

\subsection{Proof of Theorem~\ref{theo:knnmanifold}}
We need the following guarantee on the volume of the intersection of a Euclidean ball and $M$; this is required to get a handle on the true mass of the ball under
$\mathcal{P}$ in later arguments. The proof can be found in \cite{jiang2017density}.

\begin{lemma} [Ball Volume] \label{ballvolume}
If $0 < r < \min\{\tau/(4d), 1/\tau\}$, and $x \in M$ then
\begin{align*}
1 - \tau^2 r^2 &\le \frac{\text{vol}_{d} (B(x, r) \cap M)}{v_{d} r^{d}} \le 1 + 4d\cdot r/\tau,
\end{align*}
where $\text{vol}_{d}$ is the volume w.r.t. the uniform measure on $M$. 
\end{lemma}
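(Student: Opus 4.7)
The plan is to reduce the geometry to the tangent space at $x$ via orthogonal projection, and then invoke the standard reach/condition-number estimates. Let $T_x M$ denote the tangent space at $x$ viewed as an affine $d$-plane through $x$, and let $\pi : \mathbb{R}^D \to T_x M$ be the orthogonal projection. Because $M$ has condition number $1/\tau$, the classical Federer-type estimate gives that for any $y \in M$ with $|y-x| \le r$, the normal deviation satisfies $|y - \pi(y)| \le |y-x|^2/(2\tau)$; this is the quadratic tangent-space approximation that underlies all subsequent volume comparisons.

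First I would show that $\pi$ restricted to $B(x,r)\cap M$ is a diffeomorphism onto its image. Injectivity follows from the reach condition (two distinct preimages would force a tight bend violating reach $\tau$), and the differential $d\pi|_{T_yM}$ is nonsingular whenever the angle between $T_y M$ and $T_x M$ is strictly less than $\pi/2$, which holds for $r < \tau$ since this angle is $O(r/\tau)$. Using the quadratic-deviation bound, I would then express $\mathrm{vol}_d(B(x,r)\cap M)$ as $\int_{\pi(B(x,r)\cap M)} |\det d\pi|_{T_yM}|^{-1}\, dv$, converting the manifold volume to an ordinary Lebesgue integral on $T_x M \cong \mathbb{R}^d$.

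For the upper bound, the image $\pi(B(x,r)\cap M)$ is contained in the tangent-space ball of radius $r$ (projection is $1$-Lipschitz), and the Jacobian factor $|\det d\pi|_{T_yM}|^{-1}$ is at most $1 + O(d r/\tau)$ because each principal angle between $T_y M$ and $T_x M$ is at most $O(r/\tau)$; pushing constants through one arrives at the $1 + 4d\,r/\tau$ bound. For the lower bound, I would show that every $v$ in the tangent-space ball of radius $r\sqrt{1-(r/\tau)^2}$ (or the analogous quantity matching the stated $1 - \tau^2 r^2$) has a preimage in $B(x,r)\cap M$: use the quadratic deviation $|y-\pi(y)|\le |y-x|^2/(2\tau)$ to check that the lift $y$ satisfies $|y-x|^2 \le |v-x|^2 + (|y-x|^2/(2\tau))^2 \le r^2$, and then use that the Jacobian $|\det d\pi|_{T_yM}|^{-1} \ge 1$ on the relevant region.

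The main obstacle is not the conceptual picture (which is the standard tangent-space trick) but matching the precise constants in the statement. In particular, the lower bound $1 - \tau^2 r^2$ has an unusual form relative to the $(r/\tau)^2$-type quantities that drop out naturally, so I would have to be careful about how the deviation and Jacobian estimates are combined, and possibly absorb slack using $r < 1/\tau$ and $r < \tau/(4d)$ from the hypotheses. Everything else — injectivity of $\pi$, angle bounds between tangent spaces, and the quadratic deviation estimate — is a direct consequence of the condition-number assumption and is handled by standard lemmas in the manifold-learning literature (e.g.\ Niyogi--Smale--Weinberger).
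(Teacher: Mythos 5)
The paper does not actually prove this lemma itself: it states it and directs the reader to \citet{jiang2017density}, so there is no in-paper proof to compare against. With that caveat, your tangent-plane projection strategy is the standard and essentially correct one for reach-based volume comparisons (it is the route taken in Niyogi--Smale--Weinberger and in follow-up manifold-estimation work), and the ingredients you invoke --- injectivity of $\pi$ on a ball of radius below $\tau/2$, the quadratic normal-deviation bound $|y-\pi(y)|\le |y-x|^2/(2\tau)$, and a change of variables with the cosine-of-principal-angles Jacobian --- are the right ones.

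Two remarks on precision. First, the Jacobian correction is naturally quadratic, not linear: each principal angle $\theta_i$ between $T_yM$ and $T_xM$ is $O(r/\tau)$, so $\prod_i (\cos\theta_i)^{-1} = 1 + O(d\,r^2/\tau^2)$, not $1 + O(d\,r/\tau)$ as you wrote. This does not break your upper bound, since under $r < \tau/(4d)$ the quadratic quantity $d r^2/\tau^2$ is dominated by $r/\tau$, so $1 + 4d\,r/\tau$ is simply a looser (and therefore valid) envelope --- but the reasoning ``each principal angle is $O(r/\tau)$ hence the Jacobian is $1+O(d r/\tau)$'' skips a square and should be corrected. Second, you are right to be suspicious of the $1-\tau^2 r^2$ form on the lower bound: it is dimensionally inconsistent (with $\tau$ the reach, $\tau^2 r^2$ has units of length to the fourth power) and has the wrong monotonicity in $\tau$ (a flatter manifold, i.e.\ larger $\tau$, should give a ratio closer to $1$, not further from it). The natural quantity that falls out of the tangent-space argument is $\bigl(1-(r/\tau)^2\bigr)^{d/2}$ or similar, and the hypothesis $r<1/\tau$ is similarly anomalous. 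This is almost certainly a transcription error inherited from the cited source; your proof should target the dimensionally correct $(r/\tau)$-type bound rather than contort itself to reproduce $1-\tau^2 r^2$ literally.

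In short: the approach is sound and matches the standard literature, but tighten the Jacobian estimate to quadratic order and do not chase the stated $1-\tau^2 r^2$, which is a misprint.
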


The next is the manifold analogue of Lemma~\ref{rkbound}.
\begin{lemma} \label{rkboundmanifold}
Suppose that Assumptions~\ref{a2},~\ref{a3}, and~\ref{manifold} hold. 
The following holds with probability at least $1 - \delta/2$.
If 
\begin{align*}
 2^8 \cdot D \log^2(4/\delta) \cdot \log n \le k \le \frac{1}{4} \left(\min \left\{ \frac{\tau}{4d}, \frac{1}{\tau} \right\} \right)^d p_{X, 0}\cdot v_d \cdot n.
\end{align*}
then for all $x \in M$,
$r_k(x) \le \left( \frac{4k}{ v_d \cdot n \cdot p_{X, 0}}\right)^{1/d}$.
\end{lemma}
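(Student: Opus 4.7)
The proof is the manifold counterpart of Lemma~\ref{rkbound}, so the plan is to mimic that argument with two changes: (i) replace the support regularity lower bound on $\operatorname{Vol}(\X\cap B(x,r))$ by the manifold volume bound of Lemma~\ref{ballvolume}, and (ii) pick the constants in the candidate radius so that the lost factor from the curvature correction is absorbed. Concretely, set
\[
r \;:=\; \left(\frac{4k}{v_d\cdot n\cdot p_{X,0}}\right)^{1/d}.
\]
The goal is to show that with probability at least $1-\delta/2$, every ball $B(x,r)$ centered on $M$ contains at least $k$ sample points, which by definition of $r_k$ yields $r_k(x)\le r$ uniformly.

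First I would verify that the upper bound on $k$ implies $r\le\min\{\tau/(4d),\,1/\tau\}$, so that Lemma~\ref{ballvolume} is applicable to every $x\in M$. Plugging the chosen $r$ into that lemma gives
\[
\operatorname{vol}_d\!\big(B(x,r)\cap M\big)\;\ge\;(1-\tau^2 r^2)\,v_d r^d\;\ge\;\tfrac{1}{2}v_d r^d,
\]
where the second inequality uses $r\le\min\{\tau/(4d),1/\tau\}$ (the two constraints together force $\tau^2 r^2\le 1/2$, which is precisely why the factor $4$ appears in $r$ instead of the factor $2$ of the Euclidean case). Combined with Assumption~\ref{a2}, this yields
\[
\mathcal{P}\!\big(B(x,r)\big)\;\ge\;p_{X,0}\cdot\operatorname{vol}_d\!\big(B(x,r)\cap M\big)\;\ge\;\tfrac{1}{2}p_{X,0}\,v_d\,r^d\;=\;\frac{2k}{n}.
\]

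With this true-mass lower bound in hand, the remainder is identical to the Euclidean proof: I would invoke Lemma 7 of \citet{chaudhuri2010rates}, a VC-type uniform empirical concentration inequality for balls, whose hypothesis is guaranteed by the lower bound $k\ge 2^8 D\log^2(4/\delta)\log n$. This gives, with probability at least $1-\delta/2$ and simultaneously for every $x\in M$, $\mathcal{P}_n(B(x,r))\ge k/n$, i.e.\ at least $k$ sample points lie in $B(x,r)$, whence $r_k(x)\le r$.

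The main obstacle is the bookkeeping of constants in step one: one must confirm that both halves of $\min\{\tau/(4d),1/\tau\}$ are needed (the $1/\tau$ clause controls $\tau^2 r^2$ when $d$ is small; the $\tau/(4d)$ clause controls it when $d$ is large) so that $(1-\tau^2r^2)\ge 1/2$ and the factor-$4$ scaling closes the argument cleanly. The VC-concentration step is then completely analogous to the ambient-space proof, since the balls $B(x,r)$ are still Euclidean balls in $\mathbb{R}^D$ and the VC dimension of that class is controlled by $D$ (which is why $D$, not $d$, still appears in the log-factor lower bound on $k$).
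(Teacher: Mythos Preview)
Your proposal is correct and follows essentially the same argument as the paper's own proof: define $r=(4k/(v_d n p_{X,0}))^{1/d}$, use Lemma~\ref{ballvolume} together with Assumption~\ref{a2} to get $\mathcal{P}(B(x,r))\ge 2k/n$, and then invoke Lemma~7 of \citet{chaudhuri2010rates} to pass to the empirical measure. You actually supply more detail than the paper does (checking that the upper bound on $k$ forces $r\le\min\{\tau/(4d),1/\tau\}$ and explaining why $D$ rather than $d$ governs the VC constant), so the proposal is at least as complete as the original.
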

\begin{proof}
Let $r =\left( \frac{4k}{ v_d \cdot n \cdot p_{X, 0}}\right)^{1/d}$. We have
\begin{align*}
\mathcal{P}(B(x, r)) &\ge \inf_{x' \in B(x, r) \cap M} p_X(x') \cdot \text{Vol}_d (B(x, r) \cap M) \\
&\ge  p_{X, 0} \cdot (1 - \tau^2 r^2)\cdot v_d r^d \ge \frac{1}{2} p_{X, 0} v_d r^d \ge \frac{2k}{n}.
\end{align*}
By Lemma 7 of \cite{chaudhuri2010rates} and the condition on $k$, it follows that with probability $1 - \delta/2$, uniformly in $x \in \mathcal{X}$, 
$\mathcal{P}_n(B(x, r)) \ge \frac{k}{n}$. Hence, $r_k(x) < r$ and the result follows immediately.
\end{proof}

Theorem~\ref{theo:knnmanifold} now follows by replacing the usage of Lemma~\ref{rkbound} with Lemma~\ref{rkboundmanifold}. We also note that an analogous result to Corollary~\ref{corr:knnbounds} can also be established.

It is easy to see that a simple modification to the proof of Theorem~\ref{theo:knn} will yield the following.
\begin{corollary} [$k$-NN Regression Upper and Lower Bounds] \label{corr:knnbounds}
Let
\begin{align*}
\hat{u}_f(x, r) &:= \sup_{x' \in B(x, r)} f(x') - f(x) \\
\check{u}_f(x, r) &:= \sup_{x' \in B(x, r)} f(x) - f(x')\\
\varepsilon_{\text{var}} &:= 2\sigma \sqrt{ \frac{D \log n + \log(2/\delta)}{k}} \\
\varepsilon_k &:= \left(\frac{2k}{\gamma p_{X, 0} v_D\cdot n}\right)^{1/D}.
\end{align*}
Suppose that Assumptions~\ref{a1},~\ref{a2}, and~\ref{a3} hold and that
\begin{align*}
k \ge 2^8 \cdot D \log^2(4/\delta) \cdot \log n.
\end{align*}
Then probability at least $1 - \delta$, the following holds uniformly in $x \in \mathcal{X}$.
\begin{align*}
f_k(x) &\le f(x) + \hat{u}_f(x, \varepsilon_k) +  \varepsilon_{\text{var}}\\
f_k(x) &\ge f(x) - \check{u}_f (x, \varepsilon_k) -  \varepsilon_{\text{var}}.
\end{align*}
\end{corollary}


\subsection{Proofs of Theorem~\ref{theo:levelset} and~\ref{theo:levelset-manifold}}
\begin{proof}[Proof of Theorem~\ref{theo:levelset}]
We have that $E[\hat{\sigma}^2] = 2M^2 \ge 2\text{Var}(\xi^2) =2\sigma^2$. Thus, when $n$ is sufficiently large depending on $\xi$, $f$, and $\delta$, we have by Bernstein-type concentration inequalities that with probability at least $1 - \delta$, $2\sigma \le \hat{\sigma} \le \sqrt{5}M$. 

Let $\tilde{r} := 2(2\epsilon/\check{C})^{1/\beta}$ and let us use the notation introduced in Corollary~\ref{corr:knnbounds}. It suffices to show that (1) 
$\widehat{L}_f(\lambda) \subseteq L_f(\lambda) \oplus \tilde{r}$ and 
(2) $L_f(\lambda) \subseteq \widehat{L}_f(\lambda) \oplus \tilde{r}$.
We begin with (1). We have
\begin{align*}
    \sup_{x \in \mathcal{X} \backslash (L_f(\lambda) \oplus \tilde{r})} f_k(x)
&\le \sup_{x \in \mathcal{X} \backslash (L_f(\lambda) \oplus \tilde{r})} (f(x)
+ \hat{u}_f(x, \epsilon_k)) + \varepsilon_{\text{var}} \\
&\le \sup_{x \in \mathcal{X} \backslash (L_f(\lambda) \oplus \tilde{r})} \sup_{x' \in B(x, \epsilon_k)} f(x') + \varepsilon_{\text{var}} \\
&=  \sup_{x \in \mathcal{X} \backslash (L_f(\lambda) \oplus (\tilde{r} - \epsilon_k))} f(x) + \varepsilon_{\text{var}} \\
&\le \lambda - \check{C} (\tilde{r} - \epsilon_k)^\beta +  \varepsilon_{\text{var}} \le \lambda - \epsilon,
\end{align*}
where the first inequality holds by Corollary~\ref{corr:knnbounds},
the second-to-last inequality holds by $\beta$-regularity and that $\tilde{r} < r_M$, and the last inequality holds by the conditions on $k$ (which in particular imply $\epsilon \ge 2\varepsilon_{\text{var}}$ and $\epsilon_k < (2\epsilon/\check{C})^{1/\beta}$). 
Thus, if $x \not\in L_f(\lambda) \oplus \tilde{r}$, then $f_k(x) < \lambda - \epsilon$. Therefore, $\widehat{L}_f(\lambda) \subseteq L_f(\lambda) \oplus \tilde{r}$, which establishes (1). 

We now show (2). Let $\bar{r} = \epsilon_k$. Since $\bar{r} < \tilde{r}$, it suffices to show that $L_f(\lambda) \subseteq \widehat{L}_f(\lambda) \oplus \bar{r}$. For any $x \in L_f(\lambda)$, we have
\begin{align*}
    \mathcal{P}(B(x, \bar{r})) 
    \ge \frac{2k}{n} \ge \frac{16 \log(4/\delta) D \log n}{n},
\end{align*}
where the last inequality holds by the conditions on $k$.
Hence, by Lemma 7 of \cite{chaudhuri2010rates}, we have $\mathcal{P}_n(B(x, \bar{r})) > 0$.
Thus, for any $x \in L_f(\lambda)$, there exists a sample point in $B(x, \bar{r})$. Furthermore, we have
\begin{align*}
    \inf_{x' \in B(x, \bar{r})} f_k(x') &\ge  \inf_{x' \in B(x, \bar{r})} f(x) - \check{u}_f(x, \epsilon_k) - \varepsilon_{\text{var}}\\
    &\ge \inf_{x' \in B(x, \bar{r})} \inf_{x'' \in B(x', \epsilon_k)} f(x'') - \varepsilon_{\text{var}}\\
    &= \inf_{x' \in B(x, \bar{r} + \epsilon_k)} f(x') - \varepsilon_{\text{var}}\\
    &\ge \lambda - \hat{C}(\bar{r} + \epsilon_k)^{\beta} - \varepsilon_{\text{var}} \ge \lambda - \epsilon.
\end{align*}
where the first inequality holds by Corollary~\ref{corr:knnbounds}, the second last inequality holds by $\beta$-regularity, and the final inequality holds by the conditions on $k$. 

Thus, for any $x \in L_f(\lambda)$, not only does there exists a sample point in $B(x, \bar{r})$, but any such sample point will have $f_k$ value at least $\lambda - \epsilon$ and thus is in $\widehat{L}_f(\lambda)$. Therefore, $L_f(\lambda) \subseteq \widehat{L}_f(\lambda) \oplus \bar{r}$, 
as desired.
\end{proof}

\begin{proof}[Proof of Theorem~\ref{theo:levelset-manifold}]
The proof is the same as that of Theorem~\ref{theo:levelset} but with the full-dimensional $k$-NN regression bounds replaced by the manifold versions, and is omitted here.
\end{proof}

\subsection{Proof of Theorem~\ref{theo:maxima}}

\begin{proof} [Proof of Theorem~\ref{theo:maxima}]
Define the following.
\begin{align*}
\varepsilon_{\text{var}} &:= 2\sigma \sqrt{ \frac{D \log n + \log(2/\delta)}{k}},
\varepsilon_k := \left(\frac{2k}{\gamma \cdot p_{X, 0} v_D\cdot n}\right)^{1/D}\\
\tilde{r}^2 &:= \max \{ 16 \varepsilon_{\text{var}} / \hat{C}, (2\epsilon_k/c)^2 \},
\end{align*}
where $c^2 = \check{C}/8\hat{C}$. The goal is now to show $|x - x_0| \le \tilde{r}$.
The proof now mirrors that of Theorem 1 of \citet{dasgupta2014optimal}.
It suffices to show that 
\begin{align*}
\sup_{x \in \mathcal{X} \backslash B(x_0, \tilde{r})} f_k(x) < \inf_{x \in B(x_0, r_n)} f_k(x),
\end{align*}
where $r_n = d(x_0, X)$.
We have by Corollary~\ref{corr:knnbounds}:
\begin{align*}
\sup_{x \in \mathcal{X} \backslash B(x_0, \tilde{r})} f_k(x) 
&\le \sup_{x \in \mathcal{X} \backslash B(x_0, \tilde{r})} f(x) + \hat{u}_f(x, \varepsilon_k) + \varepsilon_{\text{var}}\\
&\le \sup_{x \in \mathcal{X} \backslash B(x_0, \tilde{r})} f(x) + \hat{u}_f(x, \tilde{r}/2) + \varepsilon_{\text{var}}\\
&\le \sup_{x \in \mathcal{X} \backslash B(x_0, \tilde{r}/2)} f(x) + \varepsilon_{\text{var}} \\
&\le f(x_0) - \check{C}(\tilde{r}/2)^2 + \varepsilon_{\text{var}}.
\end{align*}
On the other hand, 
\begin{align*}
\inf_{x \in B(x_0, r_n)} f_k(x) 
&\ge \inf_{x \in B(x_0, r_n)} f(x) - \check{u}_f(x, \varepsilon_k) - \varepsilon_{\text{var}} \\
&\ge \inf_{x \in B(x_0, c \tilde{r}/2)} f(x) - \check{u}_f(x, c\tilde{r}/2) - \varepsilon_{\text{var}}\\
&\ge \inf_{x \in B(x_0, c\tilde{r})} f(x) - \varepsilon_{\text{var}} \\
&\ge f(x_0) - \hat{C} (c\tilde{r})^2 - \varepsilon_{\text{var}}.
\end{align*}
The result now follows from our choice of $\tilde{r}$.
\end{proof}
{\bf Conclusion: }
We provided finite-sample sup-norm bounds for $k$-NN regression under standard nonparametric assumptions for both the full-dimensional and manifold setting.
We then applied our results to level-set and global maxima estimation.

\newpage
{
\bibliography{paper}
\bibliographystyle{plainnat}
}

\end{document}